\documentclass{article}

\usepackage{iclr2027_conference,times}
\renewcommand{\iclrruler}[1]{}

\usepackage{etoolbox}
\makeatletter
\patchcmd{\@maketitle}
  {Anonymous authors\\Paper under double-blind review}
  {\@author}
  {}
  {\PackageError{main}{Could not enable the author list}{Check the ICLR template's title definition.}}
\makeatother

\usepackage{graphicx}
\usepackage[utf8]{inputenc} % allow utf-8 input
\usepackage[T1]{fontenc}    % use 8-bit T1 fonts
\usepackage{hyperref}       % hyperlinks
\usepackage{url}            % simple URL typesetting
\usepackage{booktabs}       % professional-quality tables
\usepackage{amsfonts}       % blackboard math symbols
\usepackage{nicefrac}       % compact symbols for 1/2, etc.
\usepackage{microtype}      % microtypography    
\usepackage{multirow}
\usepackage{amsmath, amssymb, amsthm}

\usepackage{algorithm}
\usepackage{algpseudocode}

\usepackage{graphicx}
\usepackage{wrapfig}
\usepackage{caption}
\usepackage{subcaption}
\usepackage{svg}
\usepackage{marvosym}
\usepackage{soul, color, xcolor}% colors
\usepackage{ulem}
\usepackage{pifont}
\usepackage{utfsym}
\usepackage{xcolor}     % 提供颜色支持
\usepackage{colortbl}   % 提供表格行/列/单元格背景色支持

\definecolor{highlightbg}{HTML}{E8E8FF} 
\definecolor{incpink}{HTML}{D15A9A} 
\newcommand{\diff}[1]{{\scriptsize\textcolor{incpink}{(#1)}}}

\newtheorem*{Theorem*}{Theorem}
\newtheorem{Proposition}{Proposition}
\newtheorem{Corollary}{Corollary}

\title{BridgeGuard: Explicit Safety Drift for Diffusion-Based Autonomous Driving}
\author{%
  Zhenjun Qiu\textsuperscript{1,2}\thanks{Work done during internship at Bosch.}, Jianing Huang\textsuperscript{1}, Dongang Liu\textsuperscript{2}, Baiyu Du\textsuperscript{2}, Yixun Niu\textsuperscript{2},\\
  \textbf{Hao Yang\textsuperscript{1}, Xinyu Huang\textsuperscript{1}, Chuan Hu\textsuperscript{2}\thanks{%
    Corresponding authors: \mbox{Chuan Hu
    (\href{mailto:chuan.hu@sjtu.edu.cn}{\nolinkurl{chuan.hu@sjtu.edu.cn}})} and \\
    \mbox{Shu Liu
    (\href{mailto:shu.liu2@cn.bosch.com}{\nolinkurl{shu.liu2@cn.bosch.com}})}.
  }, Shu Liu\textsuperscript{1}\footnotemark[\value{footnote}]}\\
  \parbox[t]{\dimexpr\textwidth-2\tabcolsep\relax}{\normalfont\small
    \textsuperscript{1}Bosch Research\qquad
    \textsuperscript{2}Shanghai Jiao Tong University
  }
}

\begin{document}
\raggedbottom

\maketitle

\begin{abstract}
  Diffusion-based driving planners capture diverse behaviors but can generate unsafe trajectories under distribution shift. We propose BridgeGuard, a safety-constrained diffusion planning method that progressively strengthens a constraint term during denoising to drive intermediate trajectories toward a scene-dependent safety domain. Corrections operate in a low-dimensional curve space, promoting geometric coherence. A learned module, DistanceFieldNet, predicts a time-dependent distance field from bird's-eye-view features. Value and spatial-gradient supervision at queries sampled beyond expert trajectories teaches this field about both safe and unsafe regions. The learned field supplies the constraint term through safety injection while the pretrained perception backbone and planner remain frozen. We further establish sufficient conditions for terminal safety in an idealized continuous-time bridge. On Bench2Drive, BridgeGuard improves driving score/success rate from 87.99/74.99\% to 90.88/76.36\% for BridgeDrive and from 80.79/58.18\% to 90.46/74.09\% for $\text{DiffusionDrive}^{\text{geo}}$, demonstrating cross-model generalization.
\end{abstract}

\section{Introduction}
Motion planning converts scene observations and route intent into safe,
comfortable, and dynamically feasible trajectories. Classical planners use
search, sampling, or optimization with explicit constraints~\citep{paden2016survey,schwarting2018planning},
while learning-based systems learn from demonstrations~\citep{bansal2018chauffeurnetlearningdriveimitating,hu2023planningoriented}.
Diffusion and flow models support multimodal planning through
conditional generation~\citep{ho2020denoising,lipman2023flow}.
DiffusionDrive denoises multimodal trajectory anchors~\citep{diffusiondrive},
while BridgeDrive refines anchors through diffusion bridges with efficient
probability-flow ODE solvers~\citep{liu2026bridgedrive,zhou2023denoisingdiffusionbridgemodels}.

Learning expert behavior does not ensure safety under limited data coverage or
distribution shift. Existing strategies include cost-gradient guidance during
denoising~\citep{jiang2023motiondiffuser,zhong2023guided},
reward-based search in Diffusion-ES~\citep{yang2024diffusiones},
risk-based candidate ranking in SafeDrive~\citep{safedrive},
and flow-based planning with control barrier functions
(SafeFlow)~\citep{yang2026safeflowmatcher} or learned energies and physical
constraints (GuideFlow)~\citep{liu2026guideflow}.
However, candidate selection depends on proposal coverage, cost guidance on
informative gradients along the sampling path, and barrier constraints on
accurate scene geometry.

Manually Bridged Models (MBM)~\citep{naderiparizi2025constrainedgenerativemodelingmanually}
add the time-weighted negative gradient of a constraint distance to the diffusion
score. This distance is zero on a feasible set $\Omega$ and positive outside it.
Directly applying MBM to driving poses two challenges: the collision-free set depends on the scene and
prediction horizon, requiring future obstacle geometry, and direct waypoint
corrections can distort trajectory geometry.

\begin{figure}[htbp]
    \label{intro}
    \centering
    \includegraphics[width=1\linewidth]{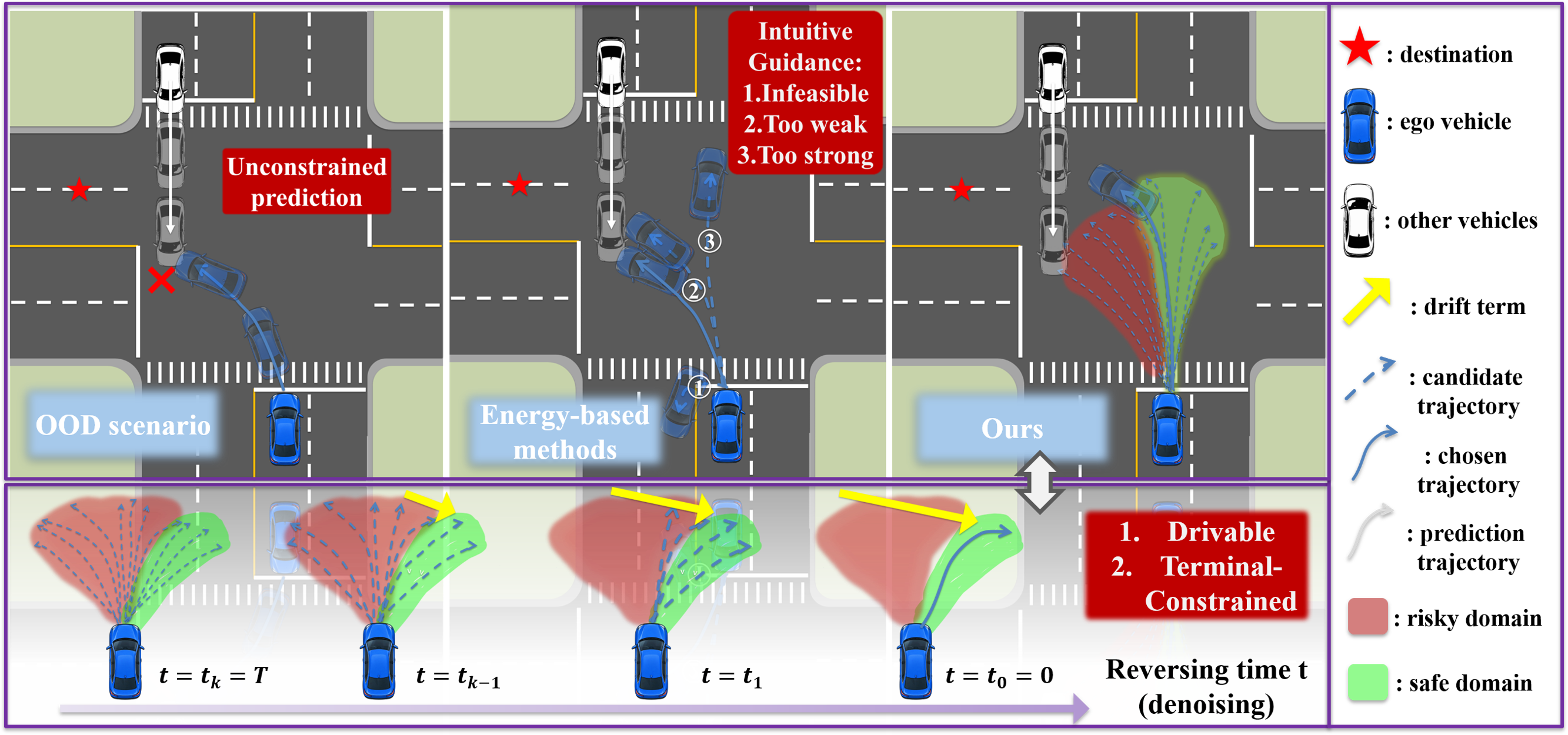}
    \caption{\textbf{BridgeGuard safety correction.} A learned distance field guides curve refinement during denoising, promoting geometric coherence. Terminal safety is guaranteed only for the idealized continuous-time bridge under the stated assumptions.}
    \label{fig:introduction}
\end{figure}

We propose BridgeGuard, which adapts MBM through a learned distance field
and low-dimensional curve corrections. DistanceFieldNet predicts a scene-
and horizon-dependent field from bird's-eye-view (BEV) features, with value and
spatial-gradient supervision covering safe and unsafe regions beyond expert paths.
Its gradients refine curves during denoising to promote geometric coherence,
while the pretrained backbone and planner remain frozen. Our terminal-safety
guarantee concerns an idealized continuous-time bridge and does not directly
extend to the learned field and discretized corrections.

Our contributions are threefold:
\begin{itemize}
    \item We adapt Manually Bridged Model to autonomous driving using a distance function encoding dynamic obstacle constraints and establish sufficient conditions for terminal safety in an idealized continuous-time bridge.
    \item We introduce DistanceFieldNet with value and spatial-gradient supervision beyond expert paths, and inject its learned constraint corrections into a frozen planner through low-dimensional curve updates.
    \item On the Bench2Drive~\citep{jia2024bench2drivemultiabilitybenchmarkingclosedloop} closed-loop benchmark, BridgeGuard improves BridgeDrive's driving score from 87.99 to 90.88 and success rate from 74.99\% to 76.36\%. It also improves $\text{DiffusionDrive}^{\text{geo}}$'s driving score from 80.79 to 90.46 and success rate from 58.18\% to 74.09\%, demonstrating cross-model generalization.
\end{itemize}

\section{Preliminaries}
\label{sec:preliminaries}

% \subsection{Problem Formulation}
% \label{sec:problem-formulation}
% The task of planning in autonomous driving can be formulated as predicting future trajectories of the ego-vehicle based on raw sensor input. We  Our safety correction acts on diffusion bridge model.

\subsection{Diffusion Models in Trajectory Planning}
\label{sec:diffusion-bridge}
\begingroup
\predisplaypenalty=0 % Allow a page break before these equations

\textbf{Diffusion models.}
Diffusion models generate samples by integrating the reverse-time SDE
from $T$ to $0$~\citep{zhou2023denoisingdiffusionbridgemodels}:
\begin{equation}
\mathrm{d}x_t=\left[f(x_t,t)-g^2(t)s(x_t,t)\right]\mathrm{d}t
+g(t)\mathrm{d}\bar{w}_t,
\label{eq:backward-sde}
\end{equation}
\endgroup
where $f$ and $g$ are the forward drift and diffusion coefficients,
$s(x_t,t)=\nabla_{x_t}\log q_t(x_t)$ is the score of the forward marginal,
and $\bar w_t$ is a reverse-time Wiener process.
For the Gaussian perturbation kernel
$q_t(x_t\mid x_0)=\mathcal{N}(\lambda_t x_0,\sigma_t^2I)$, the endpoint
prediction $\hat x_{0\mid t}=x_\theta(x_t,t)$ induces the score
\begin{equation}
\hat s_t=\frac{\lambda_t\hat x_{0\mid t}-x_t}{\sigma_t^2}.
\label{eq:endpoint-score}
\end{equation}
Diffusion bridges condition on an endpoint $x_T$, allowing initialization
from structured priors. For the Gaussian bridge kernel
$q_t(x_t\mid x_0,x_T)=\mathcal{N}(a_t x_T+b_t x_0,c_t^2I)$, the endpoint
prediction similarly induces
$\hat s_t=(a_t x_T+b_t\hat x_{0\mid t}-x_t)/c_t^2$, where $x_T$ is the fixed
terminal endpoint and $a_t,b_t,c_t$ are the bridge-scheduler coefficients~\citep{zheng2025diffusionbridgeimplicitmodels}.
Appendix~\ref{app:diffusion-bridge-background} gives the forward dynamics,
training objectives, and reverse SDE/ODE formulations.

\textbf{Diffusion models for planning.}
Given scene context $z$, the planner predicts $H$ planar waypoints
$x_0\in\mathbb{R}^{2H}$ using temporal or geometric sampling.
Anchors $\mathcal{Y}=\{y_m\}_{m=1}^{M}\subset\mathbb{R}^{2H}$ represent
different trajectory modes.
DiffusionDrive~\citep{diffusiondrive} initializes
noisy candidates around multiple anchors, then refines and ranks them with a
short, truncated denoising process.
% \begin{equation}
% \begin{aligned}
% x_{T_{\mathrm{tr}}}^{(m)}
% &=
% \sqrt{\bar{\alpha}_{T_{\mathrm{tr}}}}\,y_m
% +
% \sqrt{1-\bar{\alpha}_{T_{\mathrm{tr}}}}\,\epsilon_m,
% \quad
% \epsilon_m\sim\mathcal{N}(0,I),
% \\
% \left\{
% \hat{x}_{0\mid t}^{(m)},\,s_t^{(m)}
% \right\}_{m=1}^{M}
% &=
% f_\theta\!\left(
% \left\{x_t^{(m)}\right\}_{m=1}^{M},
% t,z
% \right),
% \\
% m^\star
% &=
% \arg\max_m s_{\mathrm{final}}^{(m)},
% \quad
% \hat{x}_0
% =
% \hat{x}_{0\mid \mathrm{final}}^{(m^\star)}.
% \end{aligned}
% \label{eq:diffusiondrive-prediction}
% \end{equation}
BridgeDrive~\citep{liu2026bridgedrive} uses a mode
classifier $h_\phi$ to select an anchor as $x_T$ and refines it through a
diffusion bridge. BridgeDrive and DiffusionDrive$^{\mathrm{geo}}$ output
$(x^{\mathrm{geo}},v)$: geometric waypoints and a separate target speed;
only the waypoints undergo diffusion. Our correction retains each planner's
waypoint representation and any separately predicted speed, evaluating safety
at physical horizons $\boldsymbol{\tau}$ (Section~\ref{sec:safety-potential}).

\subsection{Constrained Diffusion and Manually Bridged Models}
\label{sec:manual-bridges}

\paragraph{$\Omega$-Bridge.}
For a constraint set $\Omega\subseteq\mathbb{R}^{d}$ and reverse drift
$\nu(x_t,t)=f(x_t,t)-g^2(t)s(x_t,t)$, a vector field
$\mathcal{B}^{\Omega}$ is an \textit{$\Omega$-bridge}~\citep{liu2023learning}
if the modified reverse process
\begin{equation}
\mathrm{d}x_t
=\left[\nu(x_t,t)
-g^2(t)\mathcal{B}^{\Omega}(x_t,t)\right]\mathrm{d}t
+g(t)\mathrm{d}\bar{w}_t
\label{eq:omega-bridge}
\end{equation}
satisfies $\mathbb{P}(x_0\in\Omega)=1$.

\paragraph{Manually bridged models.}
Manually Bridged Models~\citep{naderiparizi2025constrainedgenerativemodelingmanually}
use an \textit{$\Omega$-distance}: a nonnegative, continuous function
$\ell^\Omega$ with finite gradients almost everywhere, satisfying
\begin{equation}
\ell^\Omega(x)=0
\quad\Longleftrightarrow\quad
x\in\Omega.
\label{eq:omega-distance}
\end{equation}
For a schedule $\gamma\in C^1((0,T],\mathbb{R}_{>0})$ satisfying
\begin{equation}
\gamma(T)\approx0,
\qquad
\lim_{t\downarrow 0}\gamma(t)=+\infty,
\label{eq:manual-bridge-schedule}
\end{equation}
the \textit{manual bridge} and the resulting score are
\begin{equation}
b^\Omega(x_t,t)
=-\gamma(t)\nabla_{x_t}\ell^\Omega(x_t),
\qquad
s^\Omega(x_t,t)
=s(x_t,t)+b^\Omega(x_t,t).
\label{eq:manual-bridge}
\end{equation}
This is the score of the tilted density
$q_t^\Omega(x)\propto q_t(x)\exp[-\gamma(t)\ell^\Omega(x)]$,
which downweights constraint violations. The construction alone does not
establish an $\Omega$-bridge; sufficient conditions~\citep{wu2022diffusion}
are given in Appendix~\ref{app:original-bridge-condition}.
Nonnegative component distances can be summed to obtain a manual bridge
for a nonempty intersection of constraint sets.
%Our implementation therefore uses a finite, stabilized schedule and treats Eq.~\eqref{eq:manual-bridge} as safety guidance rather than a hard guarantee.

% \begin{equation}
% b^\Omega(x)=\sum_{j=1}^{J}b^{\Omega_i}(x)
% \label{eq:additive-constraints}
% \end{equation}
% is again a manual bridge. 
% This observation
% underlies the multi-obstacle aggregation in Section~\ref{sec:safety-potential}.

% \input{base/3_proposed_method}
% Alternative to base/3_proposed_method.tex; include only one version.
\section{Method}
\label{sec:method}

We augment a pretrained diffusion-bridge planner with a scene-dependent safety
field. The field represents obstacle-avoidance constraints and supplies spatial
gradients for refining the planner's intermediate predictions. We first define
the safety domain and its violation loss, then describe safety injection through
curve-space correction and trajectory-sampled field supervision. Finally, we
present the architecture that learns the field from sensor features and couples
it to the frozen planner.

\subsection{Safety Domain and distance function}
\label{sec:safety-potential}
\label{sec:Correction process}

\paragraph{Spatial and temporal representation.}
Let $z$ denote the observed driving scene and $x\in\mathbb{R}^{2H}$ the
planner's waypoint sequence, which may use temporal or geometric sampling.
The diffusion state $x_t$ uses the same representation. For safety evaluation,
we map $x$ to positions at $K$ physical prediction horizons
$\boldsymbol{\tau}=(\tau_1,\ldots,\tau_K)$:
\begin{equation}
P=\mathcal{E}_{\boldsymbol{\tau}}(x)
=(\mathbf{p}_1,\ldots,\mathbf{p}_K)\in\mathbb{R}^{2K},
\qquad
\mathbf{p}_i=\mathcal{E}_{\tau_i}(x).
\label{eq:temporal-extrapolation}
\end{equation}
For temporal waypoints already indexed by $\boldsymbol{\tau}$,
$\mathcal{E}_{\boldsymbol{\tau}}$ is the identity; otherwise it evaluates the
trajectory at the query horizons. For geometric plans $(x^{\mathrm{geo}},v)$,
it extrapolates temporal positions along $x^{\mathrm{geo}}$ using the separately
predicted target speed $v$. Such auxiliary predictions are held fixed and
suppressed in the arguments of $\mathcal{E}$. The safety domain and distance function are defined in the space of
time-indexed positions $P$. The diffusion model predicts waypoints in
the base planner's original representation. Safety correction updates
the parameters of a curve fitted to these waypoints, then resamples
the corrected curve into the same waypoint representation.
Physical horizons are measured from the current observation and remain fixed
during correction. They are distinct from diffusion time $t$, which decreases
from $T$ to $0$.

\paragraph{Dynamic obstacle geometry.}
For obstacle $j$, let $\mathbf{c}_j(\tau)$ and $\psi_j(\tau)$ denote its
center and heading at horizon $\tau$, and let $a_j,b_j>0$ be its
half-length and half-width. During training, these quantities are obtained
from obstacle-box and motion annotations, with short-horizon kinematic
extrapolation used to construct the queried future geometry prediction. Thus, an ego
query and an obstacle are compared at the same time horizon. In the
obstacle's local frame, define
\begin{equation}
\begin{bmatrix}\tilde p_j^x\\\tilde p_j^y\end{bmatrix}
=
\begin{bmatrix}
\cos\psi_j(\tau)&\sin\psi_j(\tau)\\
-\sin\psi_j(\tau)&\cos\psi_j(\tau)
\end{bmatrix}
\bigl(\mathbf{p}-\mathbf{c}_j(\tau)\bigr),
\qquad
\Gamma_j(\mathbf{p},\tau)
=\left(\frac{\tilde p_j^x}{a_j}\right)^8
+\left(\frac{\tilde p_j^y}{b_j}\right)^8-1.
\label{eq:gamma_superellipse}
\end{equation}
The eighth-order super-ellipse approximates an oriented obstacle footprint.
The resulting safety domain in temporal-waypoint space is
\begin{equation}
\Omega(z;\boldsymbol{\tau})
=\left\{P\in\mathbb{R}^{2K}\;\middle|\;
\Gamma_j(\mathbf{p}_i,\tau_i)\geq0,
\quad\forall i=1,\ldots,K,\quad\forall j=1,\ldots,J\right\}.
\label{eq:safety-domain}
\end{equation}
Here $J$ is the number of surrounding obstacles. A plan $x$ satisfies these
sampled safety constraints when
$\mathcal{E}_{\boldsymbol{\tau}}(x)\in\Omega(z;\boldsymbol{\tau})$.
% This domain describes
% avoidance of the modeled obstacle regions at the queried horizons; it does
% not encode every driving rule or continuous-time vehicle constraint.

\paragraph{Distance function.}
We assign each temporal query $(\mathbf{p}_i,\tau_i)$ a
per-obstacle pointwise distance function
\begin{equation}
\ell_{j}^\Omega(\mathbf{p}_i,\tau_i;z)
=\left[\arctan\!\left(-\Gamma_j(\mathbf{p}_i,\tau_i)\right)\right]_+,
\qquad [u]_+=\max(u,0).
\label{eq:ell_omega_ij}
\end{equation}
This bounded value is positive inside the obstacle region and exactly
zero on its boundary and exterior. It is continuous and differentiable almost
everywhere with finite gradients. It is an
$\Omega$-distance in the sense of Eq.~\eqref{eq:omega-distance}.

To aggregate obstacles, let $\mathcal{K}(\mathbf{p},\tau;z)$ contain the
indices of the $\min(\kappa,J)$ largest per-obstacle values, with
$\kappa\geq1$. For a fixed query $(\mathbf{p}_i,\tau_i)$, we define the
pointwise distance $d^\Omega(\mathbf{p}_i,\tau_i;z)$
by summing the per-obstacle distances over the retained obstacles
$\mathcal{K}(\mathbf{p}_i,\tau_i;z)$. The trajectory-level distance
then sums these pointwise distances over all query points:
\begin{equation}
\ell^\Omega(P;z,\boldsymbol{\tau})
=\sum_{i=1}^{K}d^\Omega(\mathbf{p}_i,\tau_i;z)
=\sum_{i=1}^{K}
\sum_{j\in\mathcal{K}(\mathbf{p}_i,\tau_i;z)}
\ell_j^\Omega(\mathbf{p}_i,\tau_i;z).
\label{eq:ell_omega_global}
\end{equation}
An empty obstacle set contributes zero. Because all components are
nonnegative and the largest is always retained, the top-$\kappa$ reduction
preserves the zero set of the full sum:
\begin{equation}
\ell^\Omega(P;z,\boldsymbol{\tau})=0
\quad\Longleftrightarrow\quad
P\in\Omega(z;\boldsymbol{\tau}).
\label{eq:trajectory-zero-set}
\end{equation}
For a planner output $x$, the corresponding distance function is therefore
$\ell^\Omega(\mathcal{E}_{\boldsymbol{\tau}}(x);z,\boldsymbol{\tau})$.
%We use $\kappa=1$, so each query measures its most severe obstacle violation.
The aggregation is continuous and differentiable almost everywhere; ranking
switches introduce additional nondifferentiable points. Its negative spatial
gradient supplies a local direction for reducing the distance function.

\subsection{Theoretical Analysis}
\label{sec:theoretical-analysis}
We analyze terminal safety for the idealized continuous-time manual bridge
acting directly in the full temporal-waypoint space, with $X_t\in\mathbb{R}^{2K}$
and safety set $\Omega(z;\boldsymbol{\tau})$. The resulting guarantee applies
to this idealized process and does not directly extend to the approximate
curve-space correction introduced in Section~\ref{sec:safety-injection}.
The following result summarizes Corollary~\ref{cor:app-manual-omega-bridge};
the full assumptions and proof are provided in
Appendix~\ref{app:omega-bridge-conditions}.

\begin{center}
\setlength{\fboxsep}{7pt}
\fbox{%
\begin{minipage}{\dimexpr\linewidth-2\fboxsep-2\fboxrule\relax}
\small
\textbf{Sufficient condition for terminal safety.}
Consider the manually bridged reverse SDE in Eq.~\eqref{eq:manually-bridged-model}. If the distance function satisfies the expected Polyak--\L{}ojasiewicz condition in Eq.~\eqref{eq:app-expected-pl-untouched} and Assumptions A1--A5 in Eqs.~\eqref{eq:app-time-independent-loss}--\eqref{eq:app-local-integrability} of Appendix~\ref{app:terminal-domination-condition} hold, then the manual bridge term $b^\Omega$ is an $\Omega$-bridge, and
\[
\mathbb{P}\!\left(X_0\in\Omega\right)=1.
\]
Thus, the terminal generated trajectory belongs to the prescribed safe set almost surely, as stated in Eq.~\eqref{eq:app-terminal-membership}.
\end{minipage}%
}
\end{center}

The key requirements are expected gradient dominance (Condition~1, Eq.~\eqref{eq:app-expected-pl-untouched}), linking distance function to gradient strength, and terminal domination (A3 and A4, Eqs.~\eqref{eq:app-drift-diffusion-bounds} and~\eqref{eq:app-terminal-attraction}), ensuring schedule strength dominates the drift and diffusion contributions. 
%Condition~1 and the drift bound in A3 are difficult to verify in general because both involve expectations under the manual bridge's marginal law $p_t^\Omega$ (Appendix~\ref{app:interpretation-difficult-conditions}).

\subsection{Safety Injection}
\label{sec:safety-injection}

The goal of safety injection is to refine the planner's waypoints $x$ so that
$\mathcal{E}_{\boldsymbol{\tau}}(x)$ lies in $\Omega(z;\boldsymbol{\tau})$.
We learn a distance field $D^\Omega_\psi(\mathbf{p},\tau;z)$ over spatial
positions and prediction horizons. At inference, we refine the predicted trajectory
in a low-dimensional curve space using field values at temporal query
waypoints. During training, temporal queries from multiple driving modes
teach the field where correction is needed, including locations absent from
the scene's expert trajectory. Detailed training and inference procedures are described in \ref{sec:training-and-inference}.

\paragraph{Correction in a parameterized curve space.}
Direct waypoint corrections can distort path geometry. We therefore fit each
denoised waypoint sequence $\hat x_{0\mid t}\in\mathbb{R}^{2H}$ with parameters
$\boldsymbol{\eta}=\operatorname{Fit}(\hat x_{0\mid t})\in\mathbb{R}^{d_\eta}$,
where $d_\eta\ll2H$. Let $F(\boldsymbol{\eta})$ resample the curve into the
base planner's waypoint representation, following its spatial sampling or
temporal indexing. The learned distance is evaluated at the corresponding
physical query horizons:
\begin{equation}
\widehat{\mathcal{\ell}}_\psi^\Omega(\boldsymbol{\eta};z,\boldsymbol{\tau})
=\sum_{i=1}^{K}
D^\Omega_\psi\!\left(
\mathcal{E}_{\tau_i}(F(\boldsymbol{\eta})),\tau_i;z\right).
\label{eq:curve-safety-objective}
\end{equation}
We differentiate through $\mathcal{E}$ and $F$, holding the query horizons,
sampling convention, and any auxiliary predictions fixed.
After updating $\boldsymbol{\eta}$, the corrected endpoint is
$\hat x^\Omega_{0\mid t}=F(\boldsymbol{\eta}^\Omega)$.
Thus, safety is evaluated in temporal-waypoint space, while correction acts
on the curve parameters. Please see more details on inference process in \ref{sec:training-and-inference}. The induced waypoint correction is
restricted to directions expressible by the chosen curve family; its
first-order form is derived in
Appendix~\ref{Low-dimensional constrained correction}.

% \paragraph{From score guidance to endpoint correction.}
% For a diffusion model with Gaussian perturbation kernel
% $q_t(x_t\mid x_0)=\mathcal{N}(\lambda_t x_0,\sigma_t^2I)$ and endpoint
% prediction $\hat x_{0\mid t}=x_\theta(x_t,t,z)$, the induced score is
% \begin{equation}
% \hat s_t=\frac{\lambda_t\hat x_{0\mid t}-x_t}{\sigma_t^2}.
% \label{score function}
% \end{equation}
% At fixed $x_t$ and $z$, a score increment $\Delta s_t$ is equivalently written
% as an endpoint increment:
% \begin{equation}
% \hat x_{0\mid t}^{\Omega}
% =\hat x_{0\mid t}+\frac{\sigma_t^2}{\lambda_t}\Delta s_t,
% \qquad
% \Delta s_t=\frac{\lambda_t}{\sigma_t^2}
% \bigl(\hat x_{0\mid t}^{\Omega}-\hat x_{0\mid t}\bigr).
% \label{direct correction}
% \end{equation}
% This local identity holds at interior scheduler times with
% $\lambda_t,\sigma_t>0$. For the Gaussian bridge kernel
% $q_t(x_t\mid x_0,x_T)=\mathcal{N}(a_t x_T+b_t x_0,c_t^2I)$, it becomes
% $\hat x_{0\mid t}^{\Omega}=\hat x_{0\mid t}+(c_t^2/b_t)\Delta s_t$, or
% $\Delta s_t=(b_t/c_t^2)(\hat x_{0\mid t}^{\Omega}-\hat x_{0\mid t})$
% \citep{zheng2025diffusionbridgeimplicitmodels}, as detailed in
% Appendix~\ref{discussion of correction equation}.

\paragraph{Supervision beyond the expert path.}
The learned distance function $D^\Omega_\psi(\mathbf{p},\tau;z)$ must provide useful values and gradients wherever the planner may
query it, whereas expert waypoints cover only a narrow and typically safe part
of this domain. We thus leverage additional sources, such as trajectories sampled from a pretrained planner or trajectory anchors, including the clustered anchors used for training~\ref{fig:training}. 
For each training scene $z$, we map sampled plans through
$\mathcal{E}_{\boldsymbol{\tau}}$ and sample the resulting
temporal waypoint--horizon pairs $(\mathbf{p}_i,\tau_i)$.
We denote this distribution of sampled points and time horizon as $(\mathbf{p},\tau)\sim \mathcal{Q}(z)$.
These queries cover both safe and dangerous regions~\citep{luo2026unleashingvlapotentialsautonomous,bansal2018chauffeurnetlearningdriveimitating} without requiring
additional expert demonstrations. The sampling configuration is given in
Section~\ref{experiments}. Quantitative improvements are reported in Sec.~\ref{sec: Ablation Study on the Training Strategy}.
\begin{figure}[htbp]
    \centering
    \includegraphics[width=1\linewidth]{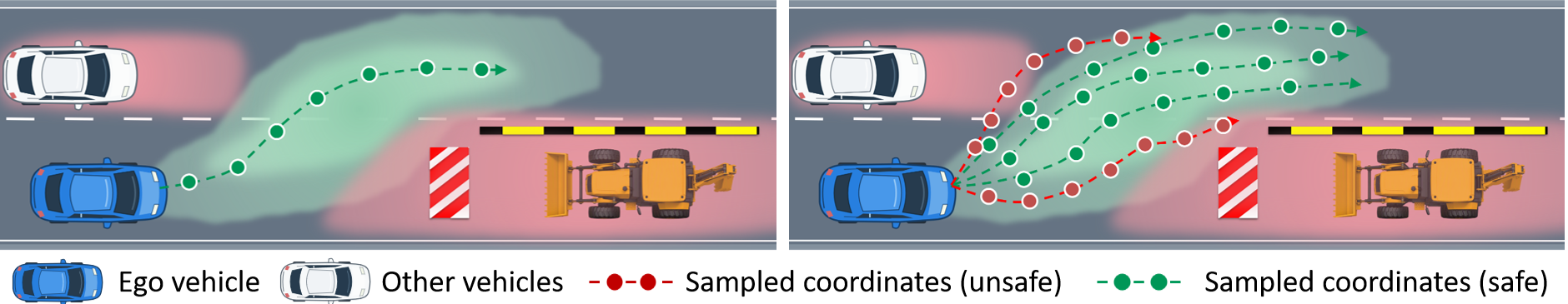}
    \caption{\textbf{DistanceFieldNet training queries.} Expert-only sampling (left) covers a narrow, typically safe region. Anchor-based sampling (right) broadens coverage to safe (green) and violating (red) queries beyond the expert path.}
    \label{fig:training}
\end{figure}

\subsection{Training Objective and Inference}
\label{sec:training-and-inference}
\paragraph{Training objective.}
We train the distance function with the following value and gradient ~\citep{li2023boostingmultimodalmodelperformance,peng2022balancedmultimodallearningonthefly,rakib2025g2dboostingmultimodallearning} supervision from distance function
\begin{align}
\mathcal{L}_{\mathrm{D}}
&=\mathbb{E}_{z,(\mathbf{p},\tau)\sim\mathcal{Q}(z)}
\left[\left|D^\Omega_\psi(\mathbf{p},\tau;z)-d^\Omega(\mathbf{p},\tau;z)\right|^2\right],
\label{eq:field-alignment}\\
\mathcal{L}_{\mathrm{GD}}
&=\mathbb{E}_{z,(\mathbf{p},\tau)\sim\mathcal{Q}(z)}
\left[\left\|\nabla_{\mathbf{p}}
D^\Omega_\psi(\mathbf{p},\tau;z)
-\nabla_{\mathbf{p}}
d^\Omega(\mathbf{p},\tau;z)\right\|_2^2\right].
\label{eq:gradient-distillation}
\end{align}
At nondifferentiable target locations, a consistent subgradient convention is used.
The two supervision terms give the overall training loss
\begin{equation}
\mathcal{L}
=\mathcal{L}_{\mathrm{D}}
+\lambda_{\mathrm{gd}}\mathcal{L}_{\mathrm{GD}}.
\label{eq:field-training-objective}
\end{equation}
During training, only parameters $\psi$ are optimized; the original perception backbone and
diffusion planner remain frozen.

\paragraph{Inference.}
DistanceFieldNet predicts the field from observed BEV features and
spatial--temporal queries.
We use a finite, stabilized score-level schedule $\gamma_\varepsilon(t)$
that is small near $T$ and increases toward $t=0$.
At fixed $x_t$ and $z$, Eq.~\eqref{eq:endpoint-score} gives
\begin{equation}
\hat x_{0\mid t}^\Omega
=\hat x_{0\mid t}+\frac{\sigma_t^2}{\lambda_t}\Delta s_t,
\qquad
\Delta s_t=\hat s_t^\Omega-\hat s_t.
\label{direct correction}
\end{equation}
At each reverse step, we fit
$\boldsymbol{\eta}=\operatorname{Fit}(\hat x_{0\mid t})$ and apply
\begin{equation}
\boldsymbol{\eta}^\Omega
=\boldsymbol{\eta}-\alpha(t)W_\eta
\nabla_{\boldsymbol{\eta}}\widehat{\mathcal{\ell}}_\psi^\Omega
(\boldsymbol{\eta};z,\boldsymbol{\tau}),
\qquad
\alpha(t)=\frac{\sigma_t^2}{\lambda_t}\gamma_\varepsilon(t).
\label{eq:curve-correction}
\end{equation}
Here $W_\eta$ is positive diagonal and sets the relative parameter scales.
Query horizons and auxiliary predictions remain fixed, while positions are
recomputed through $\mathcal{E}_{\boldsymbol{\tau}}\circ F$.
For Gaussian diffusion bridges, $\alpha(t)=(c_t^2/b_t)\gamma_\varepsilon(t)$.
The effective step is the full product $\alpha(t)$; conversion ratios are
evaluated at nonsingular interior times, with endpoint limits handled by the
base scheduler.

We resample $\hat x_{0\mid t}^\Omega=F(\boldsymbol{\eta}^\Omega)$ and use its
induced score in the base reverse SDE or probability-flow ODE.
For diffusion bridges, we keep $x_T$ fixed and retain the base solver's
anchor-conditioning drift. Algorithm~\ref{alg:safe_bridgedrive} summarizes the procedure;
Appendix~\ref{discussion of correction equation} derives the score--endpoint
conversion.

\paragraph{Scope of the bridge interpretation.}
Equation~\eqref{direct correction} is exact, but curve fitting, resampling,
and restricted gradients (Eq.~\eqref{eq:curve-pullback}) approximate full-state
manual-bridge dynamics. The idealized guarantee in
Appendix~\ref{app:omega-bridge-conditions} does not directly extend to learned
fields, finite schedules, or discretized updates. The curve representation
promotes geometric coherence without certifying kinematic feasibility or
collision freedom.

\begin{algorithm}[t]
\caption{Diffusion Planning with Learned Safety Injection (on BridgeDrive)}
\label{alg:safe_bridgedrive}
\small
\begin{algorithmic}[1]
\Require Scene $z$, anchors $\mathcal{Y}=\{y_m\}$, pretrained planner
$(h_\phi,x_\theta)$, separately predicted speed $v$, trained DistanceFieldNet $D_\psi^\Omega$, horizons $\boldsymbol{\tau}$,
reverse times $T=t_N>\cdots>t_0=0$
\Ensure Refined geometric path $x_0$ with unchanged target speed $v$
\State $m^\star\gets\arg\max_m h_\phi(y_m,z)$;
$x_{t_N}\gets x_T\gets y_{m^\star}$
\For{each reverse step $t=t_n\to t'=t_{n-1}$, $n=N,\ldots,1$}
    \State $\hat x_{0\mid t}
    \gets x_\theta(x_t,t,x_T,z)$ \Comment{Predict the clean path}
    \State $\boldsymbol{\eta}\gets\operatorname{Fit}(\hat x_{0\mid t})$
    \Comment{Project the path into curve space}
    \State Evaluate $\nabla_{\boldsymbol{\eta}}\widehat{\mathcal{\ell}}_\psi^\Omega$
    using $\mathcal{E}_{\boldsymbol{\tau}}\circ F$ in Eq.~\eqref{eq:curve-safety-objective}
    \Comment{Compute the safety direction}
    \State Update $\boldsymbol{\eta}^\Omega$ by Eq.~\eqref{eq:curve-correction}
    with $\alpha(t)=(c_t^2/b_t)\gamma_\varepsilon(t)$
    \Comment{Correct the curve coefficients}
    \State $\hat x_{0\mid t}^\Omega
    \gets F(\boldsymbol{\eta}^\Omega)$ \Comment{Recover refined waypoints}
    \State $x_{t'}\gets\operatorname{ODESolverStep}
    (x_t,\hat x_{0\mid t}^\Omega,x_T,t,t')$ \Comment{Advance the reverse ODE}
\EndFor
\State \Return $(x_0,v)$
\end{algorithmic}
\end{algorithm}

% \begin{algorithm}[t]
% \caption{Diffusion-Bridge Planning with a Learned Safety Field}
% \label{alg:safe_bridgedrive}
% \small
% \begin{algorithmic}[1]
% \Require Scene $z$, anchors $\mathcal{Y}$, pretrained classifier $h_\phi$
% and denoiser $x_\theta$, field $D_\psi$, physical horizons $\boldsymbol{\tau}$,
% reverse-step schedule
% \Ensure Refined path and target speed $x_0$
% \State Extract $\mathbf{F}_{\mathrm{BEV}}(z)$ with the frozen perception backbone
% \State $m^\star\gets\arg\max_m h_\phi(y_m,z)$;
% $x_T\gets y_{m^\star}$
% \For{each reverse solver step from $t$ to $t'<t$}
%     \State $(\hat x^{\mathrm{geo}}_{0\mid t},\hat v_{0\mid t})\gets x_\theta(x_t,t,x_T,z)$
%     \State $\boldsymbol{\eta}\gets\operatorname{Fit}(\hat x^{\mathrm{geo}}_{0\mid t})$;
%     $\mathbf{u}\gets$ original geometric $x$-coordinates
%     \State Extrapolate $P\gets\mathcal{E}_{\boldsymbol{\tau}}
%     (F(\boldsymbol{\eta};\mathbf{u}),\hat v_{0\mid t})$
%     \State Evaluate $\widehat{\mathcal{\ell}}_\psi^\Omega$ and its
%     parameter gradient through the extrapolation in
%     Eq.~\eqref{eq:curve-safety-objective}
%     \State Update $\boldsymbol{\eta}^\Omega$ using
%     Eq.~\eqref{eq:curve-correction} with
%     $\alpha(t)=(c_t^2/b_t)\gamma_\varepsilon(t)$
%     \State Form $\hat x_{0\mid t}^\Omega$ using
%     Eq.~\eqref{eq:corrected-endpoint-score}
%     \State $x_{t'}\gets\operatorname{BridgeSolverStep}
%     (x_t,\hat x_{0\mid t}^\Omega,x_T,t,t')$
% \EndFor
% \State \Return $x_0$
% \end{algorithmic}
% \end{algorithm}

\subsection{Model Architecture}
\label{sec:model-architecture}
\begin{figure}[htbp]
    \centering
    \includegraphics[width=1.0\linewidth]{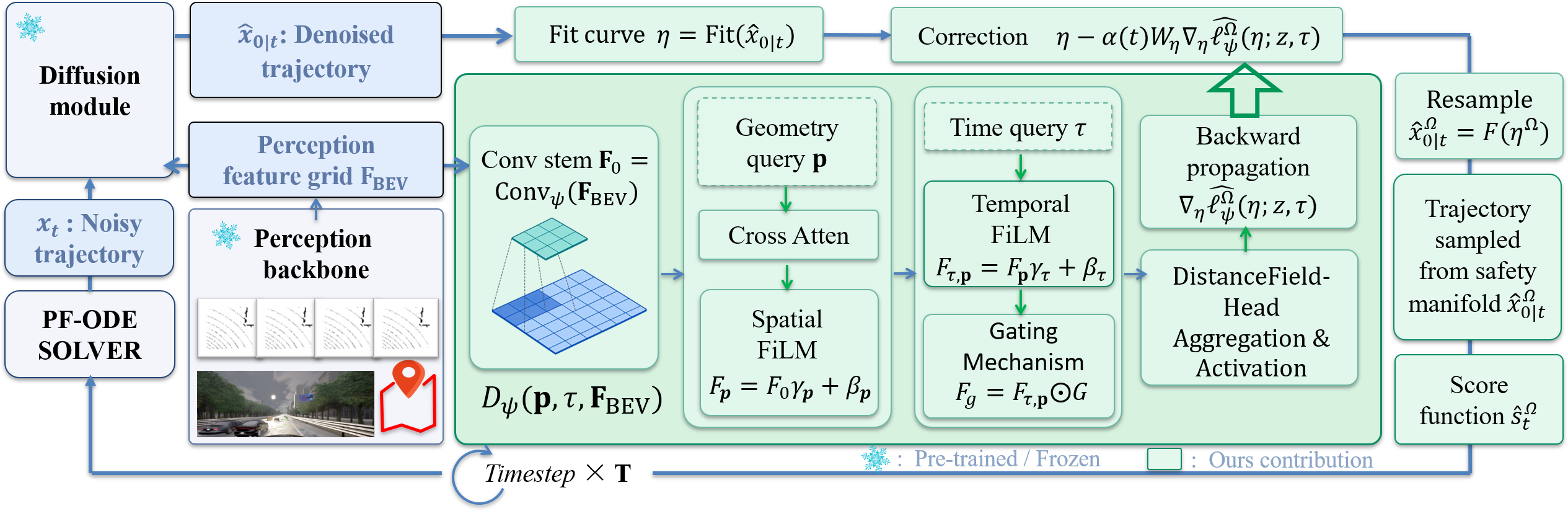}
    \caption{\textbf{BridgeGuard inference.} Green components denote BridgeGuard additions. At each denoising step, DistanceFieldNet supplies gradients for curve correction; resampled waypoints are returned to the frozen planner's solver.}
    \label{fig:placeholder}
\end{figure}

\paragraph{Perception and base planner.}
The system comprises a perception backbone, an anchor-based diffusion
planner, and the learned distance-field module, \emph{DistanceFieldNet}. Our
BridgeDrive implementation retains the pretrained TransFuser++ perception
backbone~\citep{Jaeger2023ICCV}. Its pretraining includes predicting other agents'
velocities, so its BEV features encode motion information for DistanceFieldNet.
The correction can also be coupled to
DiffusionDrive$^{\mathrm{geo}}$, the geometric adaptation of DiffusionDrive
described in the BridgeDrive paper~\citep{diffusiondrive,liu2026bridgedrive}.
The backbone and selected planner remain frozen. Sensor observations are
encoded into $\mathbf{F}_{\mathrm{BEV}}(z)$, which is shared by the planner
and DistanceFieldNet. We retain each planner's denoising and trajectory-selection
procedure, while DistanceFieldNet supplies the safety objective for refining its
clean waypoint predictions at each reverse step. 
% Resampling follows the
% selected planner's representation. For BridgeDrive, the separately predicted
% target speed is used for temporal extrapolation and control, and is neither
% diffused nor corrected by our method.

\paragraph{Query-conditioned DistanceFieldNet.}
DistanceFieldNet takes a waypoint sequence $P=(\mathbf{p}_1,\ldots,\mathbf{p}_K)$,
the corresponding physical horizons $\boldsymbol{\tau}=(\tau_1,\ldots,\tau_K)$,
and shared BEV features $\mathbf{F}_{\mathrm{BEV}}(z)$ as input.
A shared convolutional stem encodes the BEV features as
$\mathbf{F}_0=\mathrm{Conv}_\psi(\mathbf{F}_{\mathrm{BEV}})$.
We describe the per-waypoint computation for one query pair
$(\mathbf{p},\tau)$, omitting the waypoint index below.
Sinusoidal positional features~\citep{NIPS2017_3f5ee243} and MLPs embed the
BEV-normalized coordinate $\bar{\mathbf{p}}$ and horizon $\tau$ as
$\mathbf{e}_{\mathbf{p}}$ and $\mathbf{e}_{\tau}$.
The spatial query embedding cross-attends to position-encoded BEV tokens
$\mathbf{Z}_{\mathrm{BEV}}$, yielding a context $\mathbf{h}$ that generates
spatial FiLM~\citep{perez2017filmvisualreasoninggeneral} parameters. Spatial FiLM modulates $\mathbf{F}_0$ into
$\mathbf{F}_{\mathbf{p}}$, and temporal FiLM driven by $\mathbf{e}_{\tau}$
further produces $\mathbf{F}_{\tau,\mathbf{p}}$.
A query-specific gate filters the conditioned features as
$\mathbf{F}_{g}=\mathbf{F}_{\tau,\mathbf{p}}\odot G$.
DistanceFieldHead aggregates $\mathbf{F}_{g}$ into a dense response map
$\mathbf{R}$, reduces the flattened map to a scalar logit $r$, and applies
a bounded monotonic activation.
% \begin{equation}
% D_\psi(\mathbf{p},\tau,\mathbf{F}_{\mathrm{BEV}})
% =
% \frac{\pi}{2}-\arctan(\operatorname{softplus}(r)).
% \end{equation}
The same network evaluates all $K$ query pairs, and their scalar outputs
are summed to obtain the trajectory-level safety objective.
% :
% \begin{equation}
% \widehat\ell_\psi^\Omega(P;z,\boldsymbol{\tau})
% =\sum_{i=1}^{K}
% D_\psi(\mathbf{p}_i,\tau_i,\mathbf{F}_{\mathrm{BEV}}(z)).
% \end{equation}
Backpropagating this sum through the waypoint coordinates and trajectory
parameterization provides the curve-space correction direction.
Architecture details are given in Appendix~\ref{neutral network}.

\section{Experiments}\label{experiments}
\subsection{CLOSED-LOOP EVALUATION IN CARLA Simulator}
\paragraph{Benchmark}
% 在本文中，我们主要在模拟器carla\cite{Dosovitskiy17}上进行算法的闭环实验验证，闭环评测更贴近真实情景，能够好地暴露安全失效问题。Bench2Drive is a well known CARLA benchmark proposed by the paper~\cite{jia2024bench2drivemultiabilitybenchmarkingclosedloop} consisting of 220 very short (~150m) routes split across all towns with 1 safety critical scenario in each route. It's constructed under the CARLA Leaderboard 2.0\cite{Hidden Biases of End-to-End Driving Datasets} protocol for end-to-end autonomous driving.许多优秀的端到端或者VLM算法已经在该基准测试上取得了优秀的评测结果（\cite{liu2026bridgedrive, Jaeger2023ICCV,fu2025orionholisticendtoendautonomous}）。但是我们在复现这些算法实验的时候注意到，结果中仍然存在大量的安全失效场景亟待改善，这也是本文工作希望改进的方向。我们在此基准数据集上与当前的sota算法进行了对比。
We evaluate BridgeGuard on Bench2Drive~\citep{jia2024bench2drivemultiabilitybenchmarkingclosedloop},
a closed-loop benchmark in CARLA~\citep{Dosovitskiy17} built on the CARLA
Leaderboard 2.0 protocol. Its 220 routes span CARLA towns, each covering
approximately 150\,m and one safety-critical scenario. This setting exposes
failures arising from interactions with the environment. Motivated by safety
failures observed in our reproductions of leading end-to-end and vision-language-action (VLA) planners~\citep{liu2026bridgedrive,Jaeger2023ICCV,fu2025orion,Renz2025cvpr},
we compare their performance with BridgeGuard on these challenging scenarios.

\paragraph{Dataset}
Following BridgeDrive~\citep{liu2026bridgedrive}, we use the
publicly released CARLA Garage training dataset~\citep{Zimmerlin2024ArXiv}.
The data were generated using the PDM-Lite expert~\citep{beisswenger2024pdm, Sima2024ECCV} and include
camera images, LiDAR observations, and expert driving annotations.
Table~\ref{tab:main experiment results} also includes baselines trained on the official
Bench2Drive dataset, generated using the Think2Drive
expert~\citep{li2024think2driveefficientreinforcementlearning}.

\paragraph{Baselines}
We compare BridgeGuard with ten end-to-end driving baseline configurations in
Table~\ref{tab:main experiment results}. DriveTransformer~\citep{jia2025drivetransformer},
TransFuser++~\citep{Zimmerlin2024ArXiv}, and
SparseDriveV2~\citep{sun2026sparsedrivev2} provide non-VLA, non-diffusion
reference points; ORION~\citep{fu2025orion}, MindDrive~\citep{fu2025minddrive},
and SimLingo~\citep{Renz2025cvpr} represent VLA-based planners;
DiffusionDrive~\citep{diffusiondrive} and
BridgeDrive~\citep{liu2026bridgedrive} represent
diffusion-based planners; and AutoMoT~\citep{huang2026automot} combines VLA and
diffusion-based planning. We report both DiffusionDrive adaptations from
BridgeDrive~\citep{liu2026bridgedrive}:
DiffusionDrive$^{\mathrm{temp}}$ retains the original temporal waypoint
representation, while DiffusionDrive$^{\mathrm{geo}}$ predicts geometric
waypoints and a separate target speed. Our DiffusionDrive-based results use
the geometric variant. The table also identifies the data-collection expert
for each method.

\paragraph{Implementation Details}
We freeze the pretrained perception backbone and diffusion planner and train
only DistanceFieldNet. For each training scene, we sample 20 query points from two anchors and supervise field values and spatial gradients using
annotated obstacle geometry. Safety is evaluated over a 0.5~s horizon.
We fit planar paths with $y(x)=\eta_1x+\eta_2x^2+\eta_4x^4$,
where $x$ and $y$ denote forward and lateral ego-frame coordinates.
This quartic family has three free coefficients, with constant and cubic
terms fixed at zero. Correction updates these coefficients while keeping
the target speed fixed. At inference, DistanceFieldNet uses observed
BEV features and waypoint--horizon queries, without explicit obstacle dimensions
or ground-truth geometry. Detailed network, supervision, curve-refinement,
training, and sensor settings are given in Appendix~\ref{app:implementation-config}.
\begin{table}[htbp]
\centering
\caption{Comparison of different methods on Bench2Drive.}
\label{tab:main experiment results}
\resizebox{\columnwidth}{!}{%
\begin{tabular}{lccc c c}
\toprule
Method & Expert & VLA & Diffusion & DS & SR(\%) \\
\midrule
% TCP-traj*~\citep{wu2022trajectoryguided} & Think2Drive & \textcolor{red}{\ding{56}} & \textcolor{red}{\ding{56}} & 59.90 & 30.00 \\
% UniAD-Base~\citep{hu2023_uniad} & Think2Drive & \textcolor{red}{\ding{56}} & \textcolor{red}{\ding{56}} & 45.81 & 16.36 \\
% VAD~\citep{jiang2023vad} & Think2Drive & \textcolor{red}{\ding{56}} & \textcolor{red}{\ding{56}} & 42.35 & 15.00 \\
DriveTransformer~\citep{jia2025drivetransformer} & Think2Drive & \textcolor{red}{\ding{56}} & \textcolor{red}{\ding{56}} & 63.46 & 35.01 \\
ORION~\citep{fu2025orion} & Think2Drive & \textcolor{teal}{\usym{2714}} & \textcolor{red}{\ding{56}} & 77.74 & 54.62 \\
MindDrive~\citep{fu2025minddrive} & Think2Drive & \textcolor{teal}{\usym{2714}} & \textcolor{red}{\ding{56}} & 80.59 & 58.26 \\
TransFuser++~\citep{Zimmerlin2024ArXiv} & PDM-Lite & \textcolor{red}{\ding{56}} & \textcolor{red}{\ding{56}} & 84.21 & 67.27 \\
% ReCogDrive~\citep{li2025recogdrive} & PDM-Lite & \textcolor{teal}{\usym{2714}} & \textcolor{teal}{\usym{2714}} & 84.21 & 67.27 \\
SimLingo~\citep{Renz2025cvpr} & PDM-Lite & \textcolor{teal}{\usym{2714}} & \textcolor{red}{\ding{56}} & 85.07 & 67.27 \\
AutoMoT~\citep{huang2026automot}  & PDM-Lite & \textcolor{teal}{\usym{2714}} & \textcolor{teal}{\usym{2714}} & 89.42 & 74.09 \\
SparseDriveV2~\citep{sun2026sparsedrivev2} & PDM-Lite & \textcolor{red}{\ding{56}} & \textcolor{red}{\ding{56}} & 89.15 & 70.00 \\
DiffusionDrive$^{\mathrm{temp}}$~\citep{liu2026bridgedrive} & PDM-Lite & \textcolor{red}{\ding{56}} & \textcolor{teal}{\usym{2714}} & 77.68 & 52.72 \\
\rowcolor{highlightbg}DiffusionDrive$^{\mathrm{geo}}$~\citep{liu2026bridgedrive} & PDM-Lite & \textcolor{red}{\ding{56}} & \textcolor{teal}{\usym{2714}} & 80.79 & 58.18 \\
\rowcolor{highlightbg}BridgeDrive~\citep{liu2026bridgedrive} & PDM-Lite & \textcolor{red}{\ding{56}} & \textcolor{teal}{\usym{2714}} & $87.99$ & $74.99$ \\
\rowcolor{highlightbg}\textbf{Ours (based on DiffusionDrive$^{\mathrm{geo}}$)} & PDM-Lite & \textcolor{red}{\ding{56}} & \textcolor{teal}{\usym{2714}} & \textbf{90.46}\scriptsize{\textcolor{magenta!90}{ \diff{+9.67}}} & {74.09}\scriptsize{\textcolor{magenta!90}{ \diff{+15.91}}} \\
\rowcolor{highlightbg}\textbf{Ours (based on BridgeDrive)} & PDM-Lite & \textcolor{red}{\ding{56}} & \textcolor{teal}{\usym{2714}} & \textbf{90.88}\scriptsize{\textcolor{magenta!90}{ \diff{+2.89}}} & \textbf{76.36}\scriptsize{\textcolor{magenta!90}{ \diff{+1.37}}} \\
\bottomrule
\end{tabular}
}
\end{table}
% https://arxiv.org/pdf/2606.25736 87分 统一时序扩散
\paragraph{Main results}
% 表格~\ref{tab:main experiment results}中展示了我们的工作和近两年优秀的端到端以及VLA的自动驾驶工作在Bench2Drive基准上的对比。可以看到，我们的工作在Driving Score (DS) and Success Rate (SR)上都取得了sota水平，超越了表中列出的所有基线。值得注意的是，我们的工作对比VLA架构的算法在更小的模型体积上仍有优势，对比同样使用扩散模型作为规划模块的模型也表现出了更好的性能。本文的工作可以适用于普遍的扩散模型，基于BridgeDrive和DiffusionDrive开发，都相较于基线展现出了明显的提升（相较于BridgeDrive得分和成功率分别提高了\hl{2.89和1.37%}，在DiffusionDrive工作的基础上运用本方法，得分和成功率分别提高了\hl{9.67和15.91%}），显示了算法的有效性和鲁棒性。
Table~\ref{tab:main experiment results} shows that BridgeGuard on BridgeDrive
achieves the highest Driving Score (DS, 90.88) and Success Rate (SR, 76.36\%)
among the compared methods, outperforming VLA baselines with a much
smaller model. BridgeGuard improves DS by 2.89 and SR by 1.37\% over BridgeDrive,  9.67 and 15.91\%
over DiffusionDrive$^{\mathrm{geo}}$. These gains support its effectiveness
across both diffusion planners. The larger improvements on DiffusionDrive$^{\mathrm{geo}}$ may
reflect its higher baseline collision rate, which leaves more room for safety
correction.
When trained on LEAD, BridgeGuard reaches 96.76 DS and 89.78\% SR,
outperforming the compared baselines (Appendix~\ref{app:lead-results}).

\subsection{Ablation Study and Qualitative Analysis}

\paragraph{Curve dimensionality.}
The quadratic ablation fixes $\eta_4=0$, leaving the two coefficients of
$y(s)=\eta_1s+\eta_2s^2$. Allowing the third coefficient $\eta_4$ gives our
quartic curve and improves DS from 88.65 to 90.88 and SR from 73.18\% to
76.36\% (Table~\ref{tab:different curve dimensionality}).
Mean ability also rises from 64.77\% to 69.74\%, despite a small decline in
merging performance (75.00\% to 73.75\%). These results favor the richer curve
representation for safety correction; full results are in
Appendix~\ref{app:curve-dimensionality}.

\paragraph{Training-query sampling.}
\label{sec: Ablation Study on the Training Strategy}
Anchor-based queries outperform expert-only queries at the same ten-point
budget (Table~\ref{tab:different training strategies}). Sampling 20 points from
two anchors yields the best mean DS (90.96) and SR (75.23\%), while increasing
to 100 points from ten anchors provides no further improvement. These results
support supervising the distance field beyond expert trajectories with a
moderate query budget.

% Keep the ablation tables on a text page rather than a float-only page.
\begin{table}[!htb]
\centering
\begin{minipage}[t]{0.48\linewidth}
\vspace{0pt}
\centering
\caption{Curve dimensionality.}
\label{tab:different curve dimensionality}
\small
\setlength{\tabcolsep}{4pt}
\begin{tabular}{@{}lrr@{}}
\toprule
Metric & Quadratic & Quartic \\
\midrule
DS & 88.65 & \textbf{90.88} \\
SR (\%) & 73.18 & \textbf{76.36} \\
Merging (\%) & \textbf{75.00} & 73.75 \\
Mean ability (\%) & 64.77 & \textbf{69.74} \\
\bottomrule
\end{tabular}
\end{minipage}\hfill
\begin{minipage}[t]{0.50\linewidth}
\vspace{0pt}
\centering
\caption{Training-query sampling.}
\label{tab:different training strategies}
\small
\setlength{\tabcolsep}{2.5pt}
\begin{tabular}{@{}r l c c@{}}
\toprule
Queries & Source & DS & SR (\%) \\
\midrule
10 & Expert & $89.17\pm0.68$ & $70.23\pm2.25$ \\
10 & 1 anchor & $89.67\pm0.23$ & $72.73\pm1.93$ \\
\rowcolor{blue!10}
20 & 2 anchors & $\mathbf{90.96\pm0.34}$ & $\mathbf{75.23\pm0.96}$ \\
100 & 10 anchors & $90.69\pm0.01$ & $73.41\pm0.96$ \\
\bottomrule
\end{tabular}
\end{minipage}
\end{table}

\paragraph{Advantages over BridgeDrive}
Figures~\ref{fig:detail_carla} and \ref{fig:detail_carla2} show traffic conflicts
that BridgeDrive fails to resolve during an unprotected left turn and a
multi-lane change on Bench2Drive. BridgeGuard instead yields and completes both maneuvers
safely. These cases illustrate how our safety drift term corrects potentially
dangerous trajectories, pulling the ego vehicle away from imminent conflicts
and back into the safe region in an explicit manner. We refer readers to the
supplementary videos described in Appendix~\ref{app:supplementary-videos}
for comparisons of BridgeDrive and BridgeGuard in two lane-changing cases
and one unprotected left-turn case.
% Its distance field learns from safe and violating queries beyond
% expert trajectories, providing explicit collision-avoidance guidance alongside
% the planner's navigation objective.
\begin{figure}[!htb]
    \centering
    \includegraphics[width=1\linewidth]{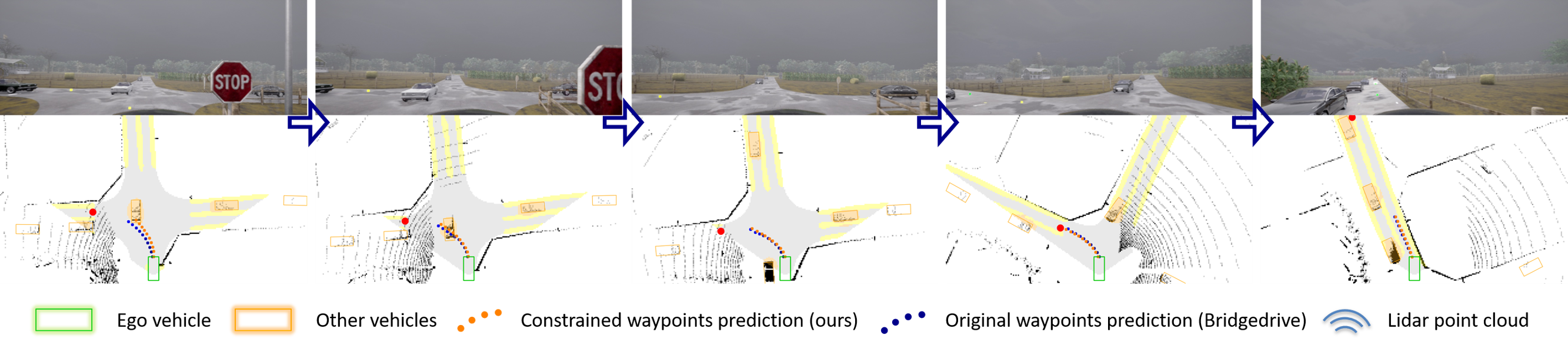}
    \caption{\textbf{Unprotected left turn on Bench2Drive.} BridgeDrive (blue) collides with an oncoming vehicle; BridgeGuard (orange) avoids the collision by bypassing the vehicle. See also Appendix~\ref{app:supplementary-videos}.}
    \label{fig:detail_carla}
\end{figure}
\begin{figure}[!htb]
    \centering
    \includegraphics[width=1\linewidth]{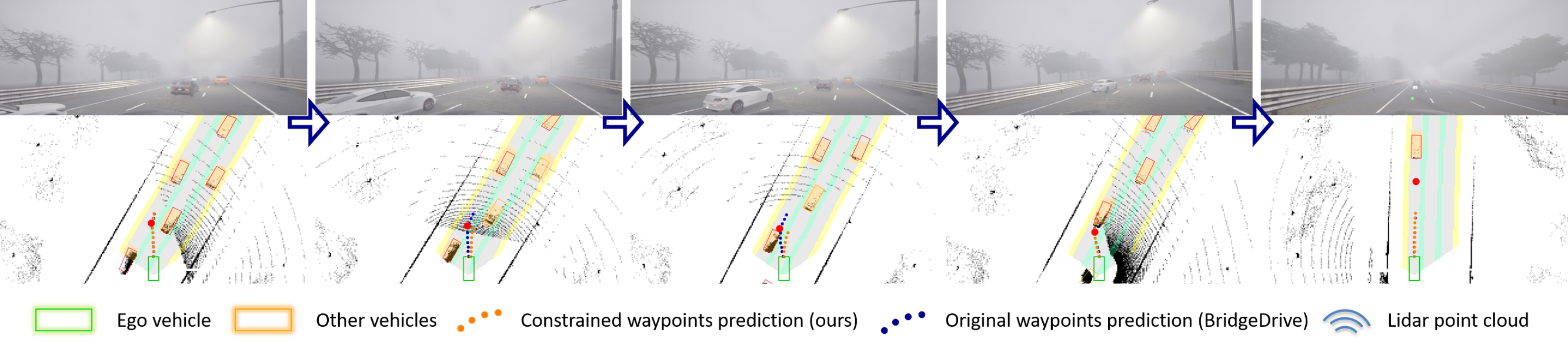}
    \caption{\textbf{Multi-lane change on Bench2Drive.} BridgeDrive (blue) enters an occupied lane; BridgeGuard (orange) yields to the white vehicle and merges safely. See also Appendix~\ref{app:supplementary-videos}.}
    \label{fig:detail_carla2}
\end{figure}

% Related work is included in the appendix.
% \input{base/5_related_work}

\section{LIMITATIONS \& FUTURE WORK}
\label{FUTURE WORK}
BridgeGuard has two main limitations. First, we consider only one distance
function, constructed from eighth-order superellipses. Although it correctly
identifies the prescribed safe set, its gradient can be small in the interior
of a superellipse, providing a weak correction for waypoints that lie deep
inside an unsafe region. 
This behavior may also make the gradient-dominance
condition underlying the idealized $\Omega$-bridge guarantee more difficult to
satisfy. 
Future work should explore alternative smooth distance functions with
well-conditioned gradients throughout unsafe regions and characterize when
they satisfy the sufficient conditions for an $\Omega$-bridge.

Second, our correction is restricted to a low-dimensional parametric curve
space. This promotes geometric coherence but limits the available correction
directions and may not capture the full diversity of drivable, dynamically
feasible trajectories. A promising direction is to learn a richer drivable
trajectory manifold or a constraint-aware trajectory representation that
preserves feasibility while allowing more flexible safety corrections.
% \paragraph{Current limitations and future work.}
% \paragraph{局限性}（1）本文的工作通过在扩散去噪过程引入桥接项，强制生成轨迹进入约束域，从而提升安全性。但这确实加重了模型的计算负担，使得结构更加复杂。我们希望未来能够涉及更加轻量化的距离场预测器（distance-field predictor）。（2）本文提出的预测器依赖比较精确的感知结果。当感知backbone提取的BEV feature出现较大偏差时，会不可避免的出现性能下降。未来我们希望就这两点进行改进，增强算法的鲁棒性，同时减少计算消耗。
% While BridgeGuard improves safety by introducing a bridge drift that enforces trajectory generation within the constraint domain during diffusion denoising, this mechanism inevitably incurs additional computational overhead and increases the complexity of the inference pipeline. Future work could explore a more lightweight distance-field predictor to reduce the computational cost of constraint enforcement. Moreover, the proposed predictor relies on sufficiently accurate perception results, and substantial errors in the BEV features extracted by the perception backbone can lead to performance degradation. We therefore aim to improve robustness against perception inaccuracies while further reducing the computational overhead of BridgeGuard, toward a more efficient and robust constrained trajectory generation framework.
% \subsection{unsolved problems} 
% \begin{enumerate}
%     \item in safety domain $\Omega$, distance functions have to be zero. But now ego vehicle should be far away from the obstacle infinitely .
% \end{enumerate}
\section{Conclusion}
\label{sec:conclusion}
We introduced BridgeGuard, which combines a learned safety field with
low-dimensional curve corrections for diffusion-based driving. DistanceFieldNet
learns values and spatial gradients beyond expert trajectories, drifting
denoising while the perception backbone and planner remain frozen.
On Bench2Drive, BridgeGuard improves DS and SR for both
BridgeDrive and DiffusionDrive$^{\mathrm{geo}}$. Our analysis establishes
sufficient conditions for terminal safety in an idealized continuous-time
bridge; these do not directly guarantee safety for the learned field and
discretized corrections.
\newpage

\subsection*{AI use statement}

We used ChatGPT and Codex to assist in drafting portions of the manuscript,
including some mathematical expressions. ChatGPT also helped improve the
clarity and readability of the text, while Codex assisted with the design of
selected modules and the development of scripts. The authors manually reviewed
all AI-assisted work and checked the correctness of the text, mathematical
expressions, module designs, and scripts. We take full responsibility for the
final content of this work, including all text, mathematical claims, code, and
results produced with the assistance of these tools.

\subsection*{Reproducibility statement}

To support reproducibility, Section~\ref{sec:method} describes our method,
model architecture, and training and inference procedures.
Section~\ref{experiments} reports the datasets, evaluation protocol, and
implementation settings used in our experiments. The assumptions and proofs
underlying our theoretical analysis are provided in
Appendix~\ref{app:omega-bridge-conditions}. We plan to open-source our code and
release our trained model to enable independent reproduction of the reported
results and support further research.

% \input{main.bbl}

% \section*{References}

\bibliography{iclr2027_conference}
\bibliographystyle{apalike}
%%%%%%%%%%%%%%%%%%%%%%%%%%%%%%%%%%%%%%%%%%%%%%%%%%%%%%%%%%%%
\newpage

\appendix
\section{Related Work}
\label{app:related-work}

\subsection{Generative Models for End-to-End Autonomous Driving}
Classical autonomous-driving planners use search, sampling, or optimization with
explicit vehicle models and constraints~\citep{paden2016survey,schwarting2018planning}.
Learning-based approaches instead derive driving policies from demonstrations;
representative systems range from ChauffeurNet, which augments imitation with
synthesized perturbations~\citep{bansal2018chauffeurnetlearningdriveimitating},
to planning-oriented end-to-end architectures that jointly optimize perception,
prediction, and planning~\citep{hu2023planningoriented}. Although these methods
can learn rich scene--action mappings, a single deterministic prediction may
not represent the multiple plausible behaviors in an interactive scene.

Generative planners address this ambiguity by modeling a distribution over
future trajectories. Diffusion models~\citep{ho2020denoising} and flow
matching~\citep{lipman2023flow} provide general frameworks for iterative
conditional generation. In end-to-end driving, GenAD learns a latent space of
multimodal driving behaviors with a variational autoencoder~\citep{zheng2024genad},
whereas DiffusionDrive performs efficient truncated denoising from multimodal
trajectory anchors~\citep{diffusiondrive}.
DiffRefiner uses diffusion to refine coarse proposals through fine-grained
semantic interaction~\citep{yin2025diffrefinercoarsefinetrajectory}. BridgeDrive
instead casts anchor-to-trajectory refinement as a diffusion bridge, allowing
informative endpoints and efficient probability-flow ODE
sampling~\citep{liu2026bridgedrive,zhou2023denoisingdiffusionbridgemodels}.
Recent large-scale studies further investigate diffusion-planner scaling and
real-world deployment~\citep{zheng2026unleash}. BridgeGuard is complementary to
these advances: it retains a pretrained diffusion-bridge planner and adds an
explicit safety-correction mechanism to its reverse process.

\subsection{Safety-Guided and Constrained Trajectory Generation}
Guidance provides a flexible way to control a pretrained diffusion model.
Classifier guidance modifies the score using gradients from an auxiliary
model~\citep{dhariwal2021diffusion}, while classifier-free
guidance combines conditional and unconditional
predictions~\citep{ho2021classifierfree}. In motion generation, differentiable objectives
have similarly been used to guide denoising toward desired behaviors or away
from collisions~\citep{jiang2023motiondiffuser,zhong2023guided}. Such objectives
bias the generated distribution, but their effectiveness depends on the
quality and scale of the guidance gradients; without additional assumptions,
they do not by themselves establish terminal constraint satisfaction.

Other approaches incorporate safety through search, ranking, or constrained
dynamics. Diffusion-ES performs gradient-free reward optimization over
diffusion-generated trajectories~\citep{yang2024diffusiones}, while SafeDrive
uses trajectory-conditioned safety reasoning to rank candidate
plans~\citep{safedrive}. Flow-based planners have
introduced control barrier functions in SafeFlowMatcher~\citep{yang2026safeflowmatcher},
and GuideFlow combines learned energy guidance with physical
constraints~\citep{liu2026guideflow}. Candidate-based methods depend on the coverage of
their proposals, while barrier-based methods require an appropriate barrier
and sufficiently accurate state and geometry estimates.

Constrained-domain diffusion bridges offer a more direct connection between
generation and feasible-set membership~\citep{wu2022diffusion,liu2023learning}.
Manually Bridged Models (MBM) define a distance function that vanishes on a
feasible set $\Omega$ and add its time-weighted negative gradient to the
reverse diffusion dynamics~\citep{naderiparizi2025constrainedgenerativemodelingmanually}.
BridgeGuard adapts this idea to autonomous driving in two ways: it learns a
scene- and horizon-dependent distance field from BEV features, and it applies
the resulting correction in a low-dimensional curve space to preserve path
coherence. We additionally give sufficient conditions under which the
idealized continuous-time manual bridge reaches $\Omega$ at the terminal time.
This guarantee applies to the idealized full-space process; the learned field
and discretized curve-space correction remain practical approximations.

\section{Diffusion Models and Diffusion Bridge Models}
\label{app:diffusion-bridge-background}
This section expands the diffusion-model background in
Section~\ref{sec:diffusion-bridge} and makes explicit how an ordinary diffusion
model extends to a diffusion bridge.

\paragraph{Diffusion models.}
Consider the forward It\^o SDE
\begin{equation}
\mathrm{d}x_t=f(x_t,t)\,\mathrm{d}t+g(t)\,\mathrm{d}w_t,
\label{eq:forward-sde}
\end{equation}
and let $q_t$ denote the marginal density of $x_t$. Its score is
$s(x_t,t)=\nabla_{x_t}\log q_t(x_t)$. Starting from the terminal distribution
$q_T$ and evolving from $T$ to $0$, the reverse-time SDE and the corresponding
probability-flow ODE are, respectively,
\begin{align}
\mathrm{d}x_t
&=\left[f(x_t,t)-g^2(t)s(x_t,t)\right]\mathrm{d}t
+g(t)\,\mathrm{d}\bar w_t,
\label{eq:app-diffusion-reverse-sde}\\
\mathrm{d}x_t
&=\left[f(x_t,t)-\frac{1}{2}g^2(t)s(x_t,t)\right]\mathrm{d}t.
\label{eq:diffusion-pf-ode}
\end{align}
The ODE has the same time-indexed marginal densities as the SDE, but defines a
deterministic transport once its terminal state is fixed.

The score can be learned without evaluating the generally intractable marginal
$q_t$. Given a tractable perturbation kernel $q_{t\mid 0}(x_t\mid x_0)$,
denoising score matching minimizes~\citep{ho2020denoising,zhou2023denoisingdiffusionbridgemodels}
\begin{equation}
\mathcal{L}_{\mathrm{DSM}}(\theta)
=\mathbb{E}_{t,x_0,x_t}\!\left[
w(t)\left\|s_\theta(x_t,t)
-\nabla_{x_t}\log q_{t\mid 0}(x_t\mid x_0)\right\|_2^2
\right],
\label{eq:diffusion-score-matching}
\end{equation}
where $t$ is sampled from $(0,T)$, $x_0$ is a data sample,
$x_t\sim q_{t\mid 0}(\cdot\mid x_0)$, and $w(t)>0$ is a weighting function.
An equivalent parameterization predicts the clean sample by minimizing
\begin{equation}
\mathcal{L}_{x_0}(\theta)
=\mathbb{E}_{t,x_0,x_t}\!\left[
w(t)\left\|x_\theta(x_t,t)-x_0\right\|_2^2
\right].
\label{eq:app-diffusion-loss}
\end{equation}
For the Gaussian kernel
$q_{t\mid 0}(x_t\mid x_0)=\mathcal{N}(\lambda_t x_0,\sigma_t^2I)$,
a clean-sample prediction $\hat x_{0\mid t}=x_\theta(x_t,t)$ yields the score
estimate
\begin{equation}
\hat s_\theta(x_t,t)
=\frac{\lambda_t\hat x_{0\mid t}-x_t}{\sigma_t^2}.
\label{eq:app-endpoint-score}
\end{equation}

\paragraph{Diffusion bridge models.}
Ordinary diffusion models transport samples between data and a simple reference
distribution, usually Gaussian noise. In applications where both endpoints
carry information, a diffusion bridge instead models paired endpoints
$(x_0,x_T)$ and interpolates between their distributions. Conditioning the
reference SDE in Eq.~\eqref{eq:forward-sde} on the terminal event $x_T=y$ gives
the Doob $h$-transform~\citep{doob1984classical,zhou2023denoisingdiffusionbridgemodels}
\begin{equation}
\mathrm{d}x_t
=\left[f(x_t,t)+g^2(t)h(x_t,t;y,T)\right]\mathrm{d}t
+g(t)\,\mathrm{d}w_t,
\qquad
h(x_t,t;y,T)=\nabla_{x_t}\log q_{T\mid t}(y\mid x_t),
\label{eq:bridge-forward-sde}
\end{equation}
where $q_{T\mid t}$ is the transition density of the unconditioned reference
process. The additional drift attracts the process to $y$ as $t\rightarrow T$.

Let
$s^{y}(x_t,t)=\nabla_{x_t}\log q_t(x_t\mid x_T=y)$ denote the score of the
bridge marginal. The reverse-time SDE and its probability-flow ODE are
\begin{align}
\mathrm{d}x_t
&=\left\{f(x_t,t)-g^2(t)
\left[s^{y}(x_t,t)-h(x_t,t;y,T)\right]\right\}\mathrm{d}t
+g(t)\,\mathrm{d}\bar w_t,
\qquad x_T=y,
\label{eq:bridge-reverse-sde}\\
\mathrm{d}x_t
&=\left\{f(x_t,t)-g^2(t)
\left[\frac{1}{2}s^{y}(x_t,t)-h(x_t,t;y,T)\right]\right\}\mathrm{d}t,
\qquad x_T=y.
\label{eq:bridge-pf-ode}
\end{align}
Thus, relative to Eqs.~\eqref{eq:app-diffusion-reverse-sde}
and~\eqref{eq:diffusion-pf-ode}, bridge dynamics retain the terminal endpoint
through both the conditional score and the $h$-transform drift. If terminal
information is discarded and $x_T$ is drawn from a fixed noise distribution,
the formulation reduces to the ordinary diffusion-model setting.

Bridge scores can likewise be learned by denoising bridge score
matching~\citep{zhou2023denoisingdiffusionbridgemodels}. For a tractable bridge
kernel $q_{t\mid 0,T}(x_t\mid x_0,x_T)$, the objective is
\begin{equation}
\mathcal{L}_{\mathrm{BDSM}}(\theta)
=\mathbb{E}_{t,x_0,x_T,x_t}\!\left[
w(t)\left\|s_\theta(x_t,t,x_T)
-\nabla_{x_t}\log q_{t\mid 0,T}(x_t\mid x_0,x_T)\right\|_2^2
\right].
\label{eq:bridge-score-objective}
\end{equation}
For the Gaussian bridge kernel used in Section~\ref{sec:diffusion-bridge},
\begin{equation}
q_{t\mid 0,T}(x_t\mid x_0,x_T)
=\mathcal{N}(a_t x_T+b_t x_0,c_t^2I),
\qquad
\nabla_{x_t}\log q_{t\mid 0,T}
=\frac{a_t x_T+b_t x_0-x_t}{c_t^2}.
\label{eq:app-gaussian-bridge-kernel}
\end{equation}
Consequently, a bridge model may predict the clean endpoint directly via
$\hat x_{0\mid t}=x_\theta(x_t,t,x_T)$ and minimize
\begin{equation}
\mathcal{L}_{\mathrm{bridge}}(\theta)
=\mathbb{E}_{t,x_0,x_T,x_t}\!\left[
w(t)\left\|x_\theta(x_t,t,x_T)-x_0\right\|_2^2
\right],
\label{eq:bridge-loss}
\end{equation}
with induced conditional-score estimate
\begin{equation}
\hat s_\theta(x_t,t,x_T)
=\frac{a_t x_T+b_t\hat x_{0\mid t}-x_t}{c_t^2}.
\label{eq:app-bridge-endpoint-score}
\end{equation}
This endpoint parameterization is the one used by the diffusion-bridge planner
and motivates the score-to-endpoint conversion derived next.

\section{Deriving Endpoint Corrections from Score Corrections}
\label{discussion of correction equation}
Since the diffusion-bridge planner is parameterized by the clean-endpoint
prediction $\hat x_{0\mid t}$ rather than by an explicit score network, we
express the score correction as an equivalent endpoint correction. We first
consider the Gaussian perturbation kernel of a standard diffusion model,
$q_t(x_t\mid x_0)=\mathcal{N}(\lambda_t x_0,\sigma_t^2I)$. Its original and
corrected endpoint predictions induce
\begin{equation}
\hat s_t=\frac{\lambda_t\hat x_{0\mid t}-x_t}{\sigma_t^2},
\qquad
\hat s_t^\Omega
=\frac{\lambda_t\hat x_{0\mid t}^\Omega-x_t}{\sigma_t^2}.
\end{equation}
Evaluating both scores at the same $x_t$ and subtracting gives
\begin{equation}
\Delta s_t
=\hat s_t^\Omega-\hat s_t
=\frac{\lambda_t}{\sigma_t^2}
\left(\hat x_{0\mid t}^\Omega-\hat x_{0\mid t}\right),
\qquad
\hat x_{0\mid t}^\Omega
=\hat x_{0\mid t}+\frac{\sigma_t^2}{\lambda_t}\Delta s_t,
\end{equation}
which is Eq.~\eqref{direct correction}.

We next consider the Gaussian bridge kernel in
Section~\ref{sec:diffusion-bridge},
$q_t(x_t\mid x_0,x_T)=\mathcal{N}(a_t x_T+b_t x_0,c_t^2I)$. With the anchor
$x_T$ and scene context $z$ fixed, the corresponding scores are
\begin{equation}
\hat s_t
=\frac{a_t x_T+b_t\hat x_{0\mid t}-x_t}{c_t^2},
\qquad
\hat s_t^\Omega
=\frac{a_t x_T+b_t\hat x_{0\mid t}^\Omega-x_t}{c_t^2}.
\end{equation}
Subtracting cancels the shared anchor and current-state terms, yielding
\begin{equation}
\Delta s_t
=\frac{b_t}{c_t^2}
\left(\hat x_{0\mid t}^\Omega-\hat x_{0\mid t}\right),
\qquad
\hat x_{0\mid t}^\Omega
=\hat x_{0\mid t}+\frac{c_t^2}{b_t}\Delta s_t.
\end{equation}
Here $\Delta s_t\in\mathbb{R}^{2H}$ is the score increment in the base
planner's waypoint space, induced by resampling the corrected curve. Any
separately predicted target speed is absent from the diffusion state and its score.
The score increment is distinct from the update in curve-parameter space.

A diffusion bridge differs from a conventional diffusion model because its
reverse transition is conditioned not only on the current state $x_t$ but also
on the terminal anchor $x_T$. During one rollout, the selected anchor $x_T$ and
scene context $z$ are fixed conditioning variables. Thus, the reverse dynamics
can be regarded as a conditional Markov process~\citep{stratonovich1965conditional}:
$$
p_{\theta}(x_{t'}\mid x_t,x_T,z).
$$
Under fixed $(x_T,z)$, the bridge score depends on the current state $x_t$ and
diffusion time $t$. This justifies evaluating both the original and corrected
scores at the same $x_t$ when deriving the endpoint correction induced by the
manual bridge term.

\section{Low-dimensional constrained correction}
\label{Low-dimensional constrained correction}
The planner's waypoint sequence $x\in\mathbb{R}^{2H}$ and the temporal
trajectory $P\in\mathbb{R}^{2K}$ used for safety evaluation are connected by
$P=\mathcal{E}_{\boldsymbol{\tau}}(x)$ (Eq.~\eqref{eq:temporal-extrapolation}).
This map is the identity for temporal waypoints at the query horizons and
uses extrapolation for geometric waypoints with a separate target speed.
During correction, the scene $z$, physical horizons $\boldsymbol{\tau}$,
sampling convention, and any auxiliary predictions are fixed. For parameter
space $\Theta\subset\mathbb{R}^{d_\eta}$, the resampling map
$F:\Theta\to\mathbb{R}^{2H}$ follows the base planner's representation and restricts waypoints to
$\mathcal{M}=F(\Theta)$. The fitting map is generally not invertible.

The temporal safety domain $\Omega=\Omega(z;\boldsymbol{\tau})$ induces
safe sets for planner waypoints and curve parameters:
\begin{equation}
\begin{aligned}
\Omega_{\mathrm{plan}}
&=\{x\in\mathbb{R}^{2H}:
\mathcal{E}_{\boldsymbol{\tau}}(x)\in\Omega\},\\
\Theta^\Omega
&=\{\boldsymbol{\eta}\in\Theta:
\mathcal{E}_{\boldsymbol{\tau}}(F(\boldsymbol{\eta}))\in\Omega\}.
\end{aligned}
\end{equation}
Consequently, $F(\Theta^\Omega)=\mathcal{M}\cap\Omega_{\mathrm{plan}}$.
Curve refinement searches only among representable waypoint sequences whose
temporal query positions satisfy the safety constraints.

The learned temporal-trajectory loss is
$\widehat\ell_\psi^\Omega(P;z,\boldsymbol{\tau})
=\sum_{i=1}^{K}D^\Omega_\psi(\mathbf{p}_i,\tau_i;z)$.
The curve objective composes resampling with evaluation at physical horizons:
\begin{equation}
\widehat{\mathcal{\ell}}_\psi^\Omega
(\boldsymbol{\eta};z,\boldsymbol{\tau})
=\widehat\ell_\psi^\Omega\!\left(
\mathcal{E}_{\boldsymbol{\tau}}(F(\boldsymbol{\eta}));
z,\boldsymbol{\tau}\right).
\end{equation}
Let $J_F=\partial F/\partial\boldsymbol{\eta}$ and
$J_{\mathcal{E}}=\partial\mathcal{E}_{\boldsymbol{\tau}}(x)/\partial x$.
At differentiable points, the update in Eq.~\eqref{eq:curve-correction}
induces displacements in planner waypoints and temporal query positions,
$\Delta x=F(\boldsymbol{\eta}^\Omega)-F(\boldsymbol{\eta})$ and
$\Delta P=\mathcal{E}_{\boldsymbol{\tau}}(F(\boldsymbol{\eta}^\Omega))-P$:
\begin{equation}
\begin{aligned}
\nabla_{\boldsymbol{\eta}}\widehat{\mathcal{\ell}}_\psi^\Omega
&=J_F^\top J_{\mathcal{E}}^\top\nabla_P\widehat\ell_\psi^\Omega,\\
\Delta x
&=-\alpha(t)J_FW_\eta J_F^\top J_{\mathcal{E}}^\top
\nabla_P\widehat\ell_\psi^\Omega+o(\alpha(t)),\\
\Delta P
&=-\alpha(t)J_{\mathcal{E}}J_FW_\eta J_F^\top J_{\mathcal{E}}^\top
\nabla_P\widehat\ell_\psi^\Omega+o(\alpha(t)).
\end{aligned}
\label{eq:curve-pullback}
\end{equation}
All derivatives are evaluated at the current parameters,
$x=F(\boldsymbol{\eta})$, and $P=\mathcal{E}_{\boldsymbol{\tau}}(x)$.
For temporal waypoints at the query horizons, $J_{\mathcal{E}}=I$ and
$\Delta P=\Delta x$; otherwise the safety gradient also propagates through
$\mathcal{E}_{\boldsymbol{\tau}}$. With positive diagonal $W_\eta$, the parameter update
is a descent direction whenever this pulled-back gradient is nonzero;
whether a finite step decreases the objective depends on its magnitude.
This construction restricts waypoint changes to the tangent space of
$\mathcal{M}$ and temporal query changes to their images under
$\mathcal{E}_{\boldsymbol{\tau}}$. It is an approximation
to direct manual-bridge correction in the full temporal-waypoint space.

\section{Query-conditioned Distance Field Network}\label{neutral network}
DistanceFieldNet takes a waypoint sequence $P=(\mathbf{p}_1,\ldots,\mathbf{p}_K)$,
the corresponding physical horizons $\boldsymbol{\tau}=(\tau_1,\ldots,\tau_K)$,
and BEV features $\mathbf{F}_{\mathrm{BEV}}(z)$ as input. It predicts a scalar
distance-function value for each waypoint--horizon pair and sums these values
to obtain the trajectory-level output. The BEV features are first encoded by
a shared convolutional stem,
$$
\mathbf{F}_0=\mathrm{Conv}_{\psi}(\mathbf{F}_{\mathrm{BEV}}),
$$
which provides a base spatial representation shared across all $K$ queries.
We describe the per-waypoint computation for one query pair
$(\mathbf{p},\tau)=(\mathbf{p}_i,\tau_i)$, omitting the waypoint index below.
This representation is conditioned on both the queried spatial location and
the queried prediction time.

The query point ${\mathbf{p}}$ is normalized by the BEV metric extent and encoded with sinusoidal positional features:
$$
\bar{\mathbf{p}}
=
\left(
\frac{x}{x_{\max}},
\frac{y}{y_{\max}}
\right),
\qquad
\mathbf{e}_{\mathbf{p}}
=
\mathrm{MLP}_{\psi}
\left(
\left[
\bar{\mathbf{p}},
\mathrm{PE}(\bar{\mathbf{p}})
\right]
\right).
$$
The query token attends to dense BEV tokens through a stack of cross-attention blocks:
$$
\mathbf{h}
=
\mathrm{CrossAttnBlocks}
\left(
\mathbf{e}_{\mathbf{p}},
\mathbf{Z}_{\mathrm{BEV}}
\right),
$$
where $\mathbf{Z}_{\mathrm{BEV}}$ denotes the BEV tokens with spatial positional encoding. The resulting query context produces spatial FiLM~\citep{perez2017filmvisualreasoninggeneral} parameters:
$$
(\boldsymbol{\gamma}_{\mathbf{p}},\boldsymbol{\beta}_{\mathbf{p}})
=
\mathrm{SpatialFiLM}_{\psi}(\mathbf{h}).
$$
These parameters condition the BEV feature on the queried coordinate:
$$
\mathbf{F}_{\mathbf{p}}
=
\boldsymbol{\gamma}_{\mathbf{p}}\odot \mathbf{F}_0
+
\boldsymbol{\beta}_{\mathbf{p}}.
$$

The prediction time $\tau$ is encoded by a sinusoidal embedding followed by an MLP, and transformed into temporal FiLM parameters:
$$
\mathbf{e}_{\tau}=\mathrm{MLP}_{\tau}(\mathrm{PE}(\tau)),
\qquad
(\boldsymbol{\gamma}_{\tau},\boldsymbol{\beta}_{\tau})
=
\mathrm{TemporalFiLM}_{\psi}(\mathbf{e}_{\tau}).
$$
The temporal FiLM injects future-time information into the query-conditioned BEV feature:
$$
\mathbf{F}_{\tau,\mathbf{p}}
=
\boldsymbol{\gamma}_{\tau}\odot \mathbf{F}_{\mathbf{p}}
+
\boldsymbol{\beta}_{\tau}.
$$

The essential design is to condition the BEV representation on both query location and prediction time before distance-field prediction.
%, and the method does not rely on a strict semantic priority between the two conditioning operations.
We further compute a query-specific gate:
$$
G
=
\mathbf{G}_{\psi}
\left(
\mathbf{Z}_{\mathrm{BEV}},\mathbf{h},\mathbf{F}_{\tau,\mathbf{p}}
\right).
$$
Together with a learned convolutional gate, the conditioned BEV feature is filtered as
$$
\mathbf{F}_{g}
=
\mathbf{F}_{\tau,\mathbf{p}}
\odot
G.
$$

Finally, the gated feature is passed to a DistanceFieldHead, which denotes the complete output mapping from the query-conditioned BEV feature to a scalar distance-function value. It consists of three components: a BEV aggregation head that integrates spatial BEV evidence into a dense response map, a lightweight reducer that gradually reduces the flattened response map to a scalar logit, and a bounded monotonic activation:
$$
\mathbf{R}
=
\mathrm{Aggregation}_{\psi}(\mathbf{F}_{g}),
\qquad
r
=
\mathrm{Reducer}_{\psi}(\mathrm{Flatten}(\mathbf{R})).
$$
The final distance-function value is then obtained by the activation part of DistanceFieldHead:
$$
D_\psi(\mathbf{p},\tau,\mathbf{F}_{\mathrm{BEV}})
=
\frac{\pi}{2}-\arctan(\operatorname{softplus}(r)).
$$

The same network parameters are used for all waypoint--horizon pairs.
Summing the per-waypoint predictions gives the trajectory-level safety objective:
\begin{equation}
\widehat\ell_\psi^\Omega(P;z,\boldsymbol{\tau})
=\sum_{i=1}^{K}
D_\psi(\mathbf{p}_i,\tau_i,\mathbf{F}_{\mathrm{BEV}}(z)).
\end{equation}
Here $D_\psi(\mathbf{p},\tau,\mathbf{F}_{\mathrm{BEV}}(z))$ implements the
scene-conditioned field $D_\psi^\Omega(\mathbf{p},\tau;z)$ used in the method.
Differentiating the summed objective through the waypoint coordinates and
trajectory parameterization provides the curve-space correction direction.

\section{Implementation Configuration}
\label{app:implementation-config}
Tables~\ref{tab:distancefield-config} and~\ref{tab:training-sensor-config}
summarize the DistanceFieldNet architecture and the training and sensor
settings used in our BridgeDrive-based implementation. Network operations
are described in Appendix~\ref{neutral network}.

\paragraph{Planner and control.}
The pretrained perception backbone and diffusion planner remain frozen during
DistanceFieldNet training. The planner outputs ten geometric waypoints in the
ego frame ($H=10$) and a scalar target speed. Safety injection refines the path
while keeping the target speed fixed, and a PID controller generates throttle
and steering commands. All remaining decoding settings follow the respective
pretrained diffusion planner.

\paragraph{Safety labels and query horizons.}
Training labels use ground-truth obstacle half-lengths $a_j$ and half-widths
$b_j$ without additional safety margins. Obstacle positions are linearly
extrapolated using their annotated velocities, while headings remain constant.
We set $\kappa=1$, so each query is supervised by its largest per-obstacle
violation. Physical query times are uniformly spaced over a horizon of 0.5~s.
% TODO: Resolve the reported K=10 versus eight temporal query positions,
% and specify whether the temporal grid includes the current time.
At inference, the field is predicted from observed BEV features and
waypoint--horizon queries, without explicit obstacle dimensions or
ground-truth geometry.

\paragraph{Curve refinement.}
We fit denoised waypoints by ridge regression using
$y(x)=\eta_1x+\eta_2x^2+\eta_4x^4$, where $x$ is the forward coordinate and
$y$ is the lateral displacement in the ego frame. Thus,
$\boldsymbol{\eta}=(\eta_1,\eta_2,\eta_4)$ has dimension $d_\eta=3$, with
zero constant and cubic terms. The quadratic ablation omits the quartic term,
giving $d_\eta=2$. Corrected curves are resampled at approximately 1~m
arc-length intervals. We use $\gamma_\epsilon(t)=1/(t+\epsilon)$ with
$\epsilon>0$ and $W_\eta=10^{-3}I_{d_\eta}$ in
Eq.~\eqref{eq:curve-correction}, including the corresponding score-to-endpoint
conversion factor.
% TODO: Report the ridge regularization coefficient and numerical epsilon.

% Draft tables: highlighted entries require confirmation from the implementation.

\begin{table}[htbp]
\centering
\caption{DistanceFieldNet configuration ($B$: batch size, $Q$: query number).}
\label{tab:distancefield-config}
\small
\setlength{\tabcolsep}{5pt}
\renewcommand{\arraystretch}{1.12}
\begin{tabular}{@{}p{0.38\linewidth}p{0.58\linewidth}@{}}
\toprule
Component & Configuration \\
\midrule
Inputs & Shared BEV features, waypoint coordinates, and corresponding time horizons \\
BEV feature shape & $(B, 64, 64, 64)$ \\
Convolutional stem & $\mathbf{F}_{\mathrm{BEV}}\in\mathbb{R}^{B\times 64\times H\times W}
\rightarrow
\mathbf{F}_0\in\mathbb{R}^{B\times 32\times H\times W}$ \\
Spatial query encoding & Sinusoidal positional encoding and MLP; dimensions = 256 \\
Temporal query encoding & Sinusoidal positional encoding and MLP; dimensions = 256 \\
Query normalization & Position scaled by BEV metric extent ($y_{\max}=x_{\max}=32$)\\
Cross-attention & Query:
$\mathbf{z}_q\in\mathbb{R}^{BQ\times1\times256}$;
Key/Value:
$\mathbf{Z}_{\mathrm{BEV}}\in\mathbb{R}^{BQ\times HW\times256}$\\
Feature modulation & $\mathbf{F}'=\mathbf{F}\odot(1+\boldsymbol{\gamma})+\boldsymbol{\beta}$ \\
Gate & 
$\operatorname{Conv}_{3\times3}(32,16)$, GN$(4)$, GELU,
$\operatorname{Conv}_{3\times3}(16,1)$, Sigmoid.
\\
DistanceFieldHead
&
$\mathbf{F}_{q}^{g}\in\mathbb{R}^{BQ\times32\times H\times W}
\rightarrow
\mathbf{R}_q\in\mathbb{R}^{BQ\times1\times H\times W}$\\
Reducer & $\mathbf{R}_q\in\mathbb{R}^{BQ\times1\times H\times W}
\rightarrow r_q\in\mathbb{R}^{BQ}$ \\
Output activation & $\pi/2-\arctan(\operatorname{softplus}(r_q))$ \\
\bottomrule
\end{tabular}
\end{table}

\begin{table}[htbp]
\centering
\caption{Training and sensor configuration.}
\label{tab:training-sensor-config}
\small
\setlength{\tabcolsep}{5pt}
\renewcommand{\arraystretch}{1.12}
\begin{tabular}{@{}p{0.38\linewidth}p{0.58\linewidth}@{}}
\toprule
Setting & Configuration \\
\midrule
Trainable module & DistanceFieldNet only; perception backbone and planner frozen \\
Optimizer & AdamW \\
Initial learning rate / epochs & $3\times10^{-4}$ / 30 \\
Batch size & 50 (per GPU) \\
% Weight decay / optimizer betas & \hl{TBD} \\
Learning-rate schedule & cosine annealing schedule \\
Gradient clipping & none \\
Supervision & Field values and spatial gradients, equally weighted ($\lambda_{\mathrm{gd}}=1$) \\
Anchor bank & 60 trajectory anchors adapted from BridgeDrive for mode classification and training-query construction \\
Training queries per scene & 20 spatial points randomly sampled from two randomly selected anchors \\
Range observations & 4 LiDAR frames \\
% Range-frame count / spacing & \hl{TBD} \\
Motion compensation & Constant Velocity Prediction \\
Camera input & One front-view image ($256 \times 256$) \\
Navigation input & Target-point coordinates \\
Hardware & Eight NVIDIA H20 GPUs, 96~GB each \\
Training cost & Approximately 190 GPU hours \\
\bottomrule
\end{tabular}
\end{table}

\section{Additional Experimental Results}
\label{app:additional-results}

\subsection{Curve Dimensionality: Full Results}
\label{app:curve-dimensionality}

Table~\ref{tab:curve-dimensionality-full} reports the full Bench2Drive results
for the quadratic ablation ($\eta_4=0$) and the quartic curve representation.
The quartic representation improves four of the five ability metrics, with
the largest gains in giving way (20.00\% to 30.00\%) and overtaking
(60.00\% to 68.89\%). Although merging performance decreases slightly from
75.00\% to 73.75\%, mean ability increases from 64.77\% to 69.74\%, alongside
improvements in DS and SR. These results support retaining the quartic term
for safety correction across diverse driving scenarios.

\begin{table}[htbp]
\centering
\caption{Full curve-dimensionality results on Bench2Drive.}
\label{tab:curve-dimensionality-full}
\small
\setlength{\tabcolsep}{4pt}
\begin{tabular}{@{}lrr@{}}
\toprule
Metric & Quadratic & Quartic \\
\midrule
DS & 88.65 & \textbf{90.88} \\
SR (\%) & 73.18 & \textbf{76.36} \\
\midrule
Merging (\%) & \textbf{75.00} & 73.75 \\
Overtaking (\%) & 60.00 & \textbf{68.89} \\
Emer. brake (\%) & 78.33 & \textbf{85.00} \\
Give way (\%) & 20.00 & \textbf{30.00} \\
Traffic sign (\%) & 90.53 & \textbf{91.05} \\
Mean ability (\%) & 64.77 & \textbf{69.74} \\
\bottomrule
\end{tabular}
\end{table}

\subsection{Evaluation with LEAD Training}
\label{app:lead-results}

\begin{table}[ht]
\centering
\caption{Performance adapted to LEAD on Bench2Drive.}
\label{tab:sota_comparison on LEAD}
\resizebox{\columnwidth}{!}{%
\begin{tabular}{lccccc}
\toprule
Method & Expert & DS & SR(\%) & Effi. & Comfort. \\
\midrule
TFv6~\citep{Nguyen2026CVPR} & LEAD &
$95.2\pm0.3$ &
$86.8\pm0.7$ &
N/A &
N/A \\
BridgeDrive~\citep{liu2026bridgedrive} & LEAD &
$96.34\pm0.55$ &
$89.25\pm0.50$ &
$202.92\pm3.27$ &
$\mathbf{23.24\pm1.06}$ \\
\rowcolor{blue!10}
BridgeGuard(Ours)  & LEAD &
$\mathbf{96.76\pm 0.15}$ &
$\mathbf{89.78\pm 0.32}$ &
$\mathbf{208.50\pm 2.04}$ &
${20.37}\pm$0.19 \\
\bottomrule
\end{tabular}%
}
\end{table}
% \begin{table}[ht]
% \centering
% \caption{Performance adapted to LEAD on Bench2Drive.}
% \label{tab:sota_comparison on LEAD}
% \resizebox{\columnwidth}{!}{%
% \begin{tabular}{lccccc}
% \toprule
% Method & Expert & DS & SR(\%) & Effi. & Comfort. \\
% \midrule
% TFv6~\citep{Nguyen2026CVPR} & LEAD &
% $95.2$ &
% $86.8$ &
% N/A &
% N/A \\
% BridgeDrive~\citep{liu2026bridgedrive} & LEAD &
% $96.34$ &
% $89.25$ &
% $202.92$ &
% $23.24$ \\
% \rowcolor{blue!10}
% BridgeGuard(Ours)  & LEAD &
% $\mathbf{96.65}$&
% $\mathbf{90.00}$ &
% $\mathbf{207.06}$ &
% $\mathbf{20.50}$\\
% \bottomrule
% \end{tabular}%
% }
% \end{table}
% 表格~\ref{tab:sota_comparison on LEAD}显示了我们的方法由数据集LEAD训练后在Bench2Drive基准上的评测结果，同样取得了sota水平。注意到多条基线在该数据集上训练的实验评测结果接近，本文方法的提升幅度也并不明显。这是因为在该数据集上训练后，相关基线的安全失效情形占比已经大幅减小。
Table~\ref{tab:sota_comparison on LEAD} presents the evaluation results of our method on the Bench2Drive benchmark when trained with the LEAD dataset, where BridgeGuard achieves state-of-the-art performance. Compared with training on the original dataset, the performance gap between different baselines becomes noticeably smaller under LEAD. We attribute this reduced margin to the higher quality of the LEAD dataset, which substantially alleviates safety-critical failure cases for existing baselines and consequently leaves less room for further improvement.

\section{Supplementary Videos}
\label{app:supplementary-videos}
We provide six videos in the supplementary materials comparing BridgeDrive
and BridgeGuard (ours) in three scenarios: two lane-changing cases and one
unprotected left-turn case. Each scenario includes one video for each model.
We refer readers to these paired videos for a qualitative comparison of the
models' driving behavior and interactions with surrounding traffic.

\section{Sufficient Conditions for \texorpdfstring{$\Omega$}{Omega}-Bridge}
\label{app:omega-bridge-conditions}

This section specializes the sufficient conditions in Appendix~C.1 of
\cite{naderiparizi2025constrainedgenerativemodelingmanually} to the
autonomous-driving setting.

\subsection{Original sufficient condition}
\label{app:original-bridge-condition}

Let $X_t\in\mathbb{R}^d$ denote the stochastic process generated by the  manually bridged model~\ref{sec:manual-bridges}
\begin{equation}
\mathrm{d}x_t
=\left[\nu(x_t,t)
-g^2(t)b^{\Omega}(x_t,t)\right]\mathrm{d}t
+g(t)\mathrm{d}\bar{w}_t
\label{eq:manually-bridged-model}
\end{equation}
and let $p_t^\Omega$ be its marginal law. Following the notation of \cite{naderiparizi2025constrainedgenerativemodelingmanually}, define
\begin{align}
\beta(t)
&:=
\mathbb{E}_{X_t\sim p_t^\Omega}
\!\left[
\nabla_x\ell^\Omega(X_t;t)^\top\nu(X_t,t)
\right],
\label{eq:app-original-beta}\\
\rho(t)
&:=
\mathbb{E}_{X_t\sim p_t^\Omega}
\!\left[
\partial_t\ell^\Omega(X_t;t)
+\frac{1}{2}g^2(t)
\operatorname{tr}\!\left(\nabla_x^2\ell^\Omega(X_t;t)\right)
\right],
\label{eq:app-original-rho}\\
\zeta(t)
&:=
\exp\!\left(\int_t^T \gamma(s)\,\mathrm{d}s\right).
\label{eq:app-original-zeta}
\end{align}
The two sufficient conditions in Appendix~C.1 of
\cite{naderiparizi2025constrainedgenerativemodelingmanually} are as
follows.
\begin{enumerate}
    \item The $\Omega$-distance satisfies the expected
    Polyak--\L{}ojasiewicz condition, at every fixed diffusion time,
    \begin{equation}
    \mathbb{E}_{X_t\sim p_t^\Omega}
    \!\left[\ell^\Omega(X_t;t)\right]
    \leq
    \mathbb{E}_{X_t\sim p_t^\Omega}
    \!\left[\|\nabla_x\ell^\Omega(X_t;t)\|^2\right],
    \qquad t\in[0,T].
    \label{eq:app-expected-pl-untouched}
    \end{equation}

    \item The integrating factor dominates both the initial value and the
    accumulated drift, time-variation, and diffusion-curvature terms:
    \begin{equation}
    \lim_{t\downarrow0}\zeta(t)=+\infty,
    \qquad
    \lim_{t\downarrow0}
    \frac{\zeta(t)}{
    \displaystyle\int_t^T
    \zeta(s)\bigl(\beta(s)+\rho(s)\bigr)\,\mathrm{d}s}
    =+\infty.
    \label{eq:app-original-asymptotic-condition}
    \end{equation}
\end{enumerate}
We leave Condition~1 in Eq.~\eqref{eq:app-expected-pl-untouched} unchanged. The remainder of this section replaces only the less
transparent asymptotic Condition~2 by assumptions with a direct interpretation
for trajectory planning.

\subsection{A terminal-domination condition for trajectory planning}
\label{app:terminal-domination-condition}

Fix a terminal interval $(0,\delta]$. Assume
that there are constants $G_\Omega,H_\Omega,M_\nu,\overline g^{\,2},\kappa>0$
and an exponent $q\in[0,1)$ such that the following named assumptions hold
for $0<t\leq \delta$:
\begin{align}
\intertext{\textbf{(A1) Time-invariant distance function:}}
\ell^\Omega(x;t)
&=\ell^\Omega(x),
&
\label{eq:app-time-independent-loss}\\
\intertext{\textbf{(A2) Bounded gradient and curvature of the distance function:}}
\|\nabla_x\ell^\Omega(x)\|
&\leq G_\Omega,
&
\|\nabla_x^2\ell^\Omega(x)\|_{\mathrm{op}}
&\leq H_\Omega,
\label{eq:app-loss-regularity-bounds}\\
\intertext{\textbf{(A3) Controlled non-safety drift and diffusion terms:}}
\mathbb{E}_{t, X_t\sim p_t^\Omega}
\!\left[\|\nu(X_t,t)\|\right]
&\leq M_\nu t^{-q},
&
g^2(t)&\leq\overline g^{\,2},
\label{eq:app-drift-diffusion-bounds}\\
\intertext{\textbf{(A4) Sufficiently singular terminal guidance:}}
\gamma(t)
&\geq \frac{\kappa}{t},
&&
\label{eq:app-terminal-attraction}\\
\intertext{\textbf{(A5) Well-behaved coefficients away from the terminal endpoint:}}
\gamma,\beta,\rho
&\in L^1([\varepsilon,T]),
\quad \text{for every }\varepsilon\in(0,T).
&&
\label{eq:app-local-integrability}
\end{align}

\begin{Proposition}[Terminal domination implies the original Condition~2]
\label{prop:app-terminal-domination}
Under Assumptions A1--A5 in Eqs.~\eqref{eq:app-time-independent-loss}--
\eqref{eq:app-local-integrability}, we have
\begin{equation}
\lim_{t\downarrow0}\zeta(t)=+\infty,
\qquad
\lim_{t\downarrow0}
\frac{1}{\zeta(t)}
\int_t^T\zeta(s)
\bigl(|\beta(s)|+|\rho(s)|\bigr)\,\mathrm{d}s=0.
\label{eq:app-strong-asymptotic-condition}
\end{equation}
\end{Proposition}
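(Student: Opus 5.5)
The plan is to treat the two limits in Eq.~\eqref{eq:app-strong-asymptotic-condition} separately. First, I would turn (A4) into explicit power-law bounds on the integrating factor. Then I would use (A1)--(A3) to bound $|\beta|+|\rho|$ by a power of $s$ near the terminal time. Finally, I would estimate the ratio by splitting the integral at $\delta$.

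\emph{Step 1: the integrating factor.} By (A5), $\gamma\in L^1([t,T])$ for every $t>0$, so $\zeta(t)$ is finite and positive. For $0<t\le s\le\delta$, (A4) gives
\begin{equation*}
\frac{\zeta(s)}{\zeta(t)}=\exp\!\Bigl(-\int_t^s\gamma(r)\,\mathrm{d}r\Bigr)\le\Bigl(\frac{t}{s}\Bigr)^{\kappa}.
\end{equation*}
Taking $s=\delta$ gives $\zeta(t)\ge\zeta(\delta)(\delta/t)^\kappa$. Since $\zeta(\delta)\ge 1$ because $\gamma>0$, this yields $\lim_{t\downarrow0}\zeta(t)=+\infty$, which is the first claim.

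\emph{Step 2: bounding $\beta$ and $\rho$ on $(0,\delta]$.} By (A2), pointwise $|\nabla_x\ell^\Omega(X_t)^\top\nu(X_t,t)|\le G_\Omega\|\nu(X_t,t)\|$. Taking expectations and using (A3) gives $|\beta(s)|\le G_\Omega M_\nu s^{-q}$. I read (A3) as a bound on $\mathbb{E}_{X_s\sim p_s^\Omega}\|\nu(X_s,s)\|$ at each fixed $s$.

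By (A1), $\partial_t\ell^\Omega=0$. Moreover, $|\operatorname{tr}\nabla_x^2\ell^\Omega|\le d\,\|\nabla_x^2\ell^\Omega\|_{\mathrm{op}}\le dH_\Omega$, where $d=2K$. Together with $g^2\le\overline g^{\,2}$, this gives $|\rho(s)|\le\tfrac12 \overline g^{\,2} dH_\Omega$.

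Since $s^{-q}\ge\delta^{-q}$ on $(0,\delta]$, both bounds combine into $|\beta(s)|+|\rho(s)|\le C s^{-q}$ on $(0,\delta]$ for a constant $C$.

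\emph{Step 3: splitting the integral.} Write
\begin{equation*}
\frac{1}{\zeta(t)}\int_t^T\zeta(s)\bigl(|\beta|+|\rho|\bigr)\mathrm{d}s=\frac{1}{\zeta(t)}\int_\delta^T(\cdots)\,\mathrm{d}s+\int_t^\delta\frac{\zeta(s)}{\zeta(t)}\bigl(|\beta|+|\rho|\bigr)\mathrm{d}s .
\end{equation*}
On $[\delta,T]$, $\zeta$ is bounded by $\exp(\|\gamma\|_{L^1([\delta,T])})$, and $\beta,\rho\in L^1([\delta,T])$ by (A5). Hence the first integral is a finite constant, and dividing by $\zeta(t)\to\infty$ sends it to $0$.

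For the second term, Steps 1--2 give the bound $C t^{\kappa}\int_t^\delta s^{-q-\kappa}\,\mathrm{d}s$. I would evaluate it in three cases:
\begin{itemize}
\item If $\kappa+q>1$, it is at most $\frac{C}{\kappa+q-1}t^{1-q}\to0$, using $q<1$.
\item If $\kappa+q<1$, it is at most $\frac{C\delta^{1-q-\kappa}}{1-q-\kappa}t^{\kappa}\to0$.
\item If $\kappa+q=1$, it equals $Ct^\kappa\log(\delta/t)\to0$.
\end{itemize}
This proves the second limit.

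The main obstacle is Step 2, specifically the bound on $\beta$. It involves an expectation of a product under the bridged law $p_t^\Omega$. (A3) controls only $\mathbb{E}\|\nu\|$, so the argument must pass through the uniform gradient bound in (A2) rather than a Cauchy--Schwarz step. I also need to be careful that (A3) is interpreted at each fixed time, not averaged over $t$. The Hessian bound is likewise only an assumption, since the superellipse-based $\ell^\Omega$ is merely piecewise smooth. The calculus in Steps 1 and 3 is routine once the power-law bounds are in place.
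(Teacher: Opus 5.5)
Your proposal is correct and follows the paper's proof essentially step for step. It uses the same pointwise bound $|\beta(s)|\le G_\Omega M_\nu s^{-q}$ (the paper calls this Cauchy--Schwarz for the Euclidean inner product, which is exactly your use of (A2)), the same bound $|\rho|\le\frac{d}{2}\overline g^{\,2}H_\Omega$, the same ratio estimate $\zeta(s)/\zeta(t)\le(t/s)^\kappa$ with the identical three-case evaluation, and the same split of the integral at $\delta$; the only addition in the paper is a closing line observing $\bigl|\int_t^T\zeta(s)(\beta(s)+\rho(s))\,\mathrm{d}s\bigr|\le\int_t^T\zeta(s)(|\beta(s)|+|\rho(s)|)\,\mathrm{d}s$ to tie the displayed limit back to the original Condition~2.
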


\begin{proof}
Time independence of the safety potential gives
\begin{equation}
|\rho(t)|
\leq
\frac{d}{2}\,\overline g^{\,2}H_\Omega.
\label{eq:app-rho-bound}
\end{equation}
The Cauchy--Schwarz inequality for the Euclidean inner product and the first
moment bound on $\nu$ give
\begin{equation}
|\beta(t)|
\leq
\mathbb{E}
\!\left[\|\nabla_x\ell^\Omega(X_t)\|\,\|\nu(X_t,t)\|\right]
\leq G_\Omega M_\nu t^{-q}.
\label{eq:app-beta-bound}
\end{equation}
It follows that, for some finite constant $C_R$,
\begin{equation}
R(t)=|\beta(t)|+|\rho(t)|\leq C_Rt^{-q},
\qquad 0<t\leq \delta.
\label{eq:app-forcing-bound}
\end{equation}

By Eqs.~\eqref{eq:app-terminal-attraction} and
\eqref{eq:app-local-integrability}, let
\begin{equation}
C_\delta
:=
\exp\!\left(\int_\delta^T\gamma(r)\,\mathrm{d}r\right)
\in(0,\infty).
\end{equation}
Then, for $0<t\leq\delta$,
\begin{align}
\zeta(t)
&=
C_\delta
\exp\!\left(\int_t^\delta\gamma(r)\,\mathrm{d}r\right)\\
&\geq
C_\delta
\exp\!\left(\int_t^\delta\frac{\kappa}{r}\,\mathrm{d}r\right)
=
C_\delta\left(\frac{\delta}{t}\right)^\kappa.
\label{eq:app-zeta-lower-bound}
\end{align}
so $\zeta(t)\to+\infty$. Moreover, whenever $t\leq s\leq\delta$,
\begin{equation}
\frac{\zeta(s)}{\zeta(t)}
=
\exp\!\left(-\int_t^s\gamma(r)\,\mathrm{d}r\right)
\leq
\left(\frac{t}{s}\right)^\kappa.
\label{eq:app-zeta-ratio}
\end{equation}
Therefore,
\begin{align}
\frac{1}{\zeta(t)}
\int_t^{\delta}\zeta(s)R(s)\,\mathrm{d}s
&\leq
C_Rt^\kappa\int_t^{\delta}s^{-(\kappa+q)}\,\mathrm{d}s
\nonumber\\
&=
\begin{cases}
O(t^\kappa), & \kappa+q<1,\\
O\!\left(t^\kappa\log(\delta/t)\right), & \kappa+q=1,\\
O(t^{1-q}), & \kappa+q>1.
\end{cases}
\label{eq:app-terminal-integral-rates}
\end{align}
All three cases converge to zero because $\kappa>0$ and $q<1$. The integral
over $[\delta,T]$ is finite and independent of the lower limit $t$; after
division by $\zeta(t)\to\infty$, its contribution also vanishes. This proves
Eq.~\eqref{eq:app-strong-asymptotic-condition}. Finally,
\begin{equation}
\left|
\int_t^T\zeta(s)(\beta(s)+\rho(s))\,\mathrm{d}s
\right|
\leq
\int_t^T\zeta(s)R(s)\,\mathrm{d}s,
\end{equation}
which proves the claimed implication.
\end{proof}

\begin{Corollary}[The manual bridge is an $\Omega$-bridge]
\label{cor:app-manual-omega-bridge}
Suppose that
Condition~1 in Eq.~\eqref{eq:app-expected-pl-untouched} and Assumptions
A1--A5 in Eqs.~\eqref{eq:app-time-independent-loss}--
\eqref{eq:app-local-integrability} hold. Then the manual bridge term
$b^\Omega$ in Eq.~\eqref{eq:manually-bridged-model} is an $\Omega$-bridge;
equivalently, the solution of the manually bridged reverse SDE satisfies
\begin{equation}
\mathbb{P}\!\left(X_0\in\Omega\right)=1.
\label{eq:app-terminal-membership}
\end{equation}
\end{Corollary}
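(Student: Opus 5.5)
The corollary follows by combining Proposition~\ref{prop:app-terminal-domination} with the sufficient conditions of Appendix~C.1 of \cite{naderiparizi2025constrainedgenerativemodelingmanually}. Those conditions are restated as Condition~1 in Eq.~\eqref{eq:app-expected-pl-untouched} and Condition~2 in Eq.~\eqref{eq:app-original-asymptotic-condition}. Condition~1 is assumed directly. The work is to verify Condition~2 from A1--A5 and then invoke the cited result to conclude $\mathbb{P}(X_0\in\Omega)=1$.

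\textbf{Step 1: Condition~2 from the proposition.} Proposition~\ref{prop:app-terminal-domination} gives $\zeta(t)\to+\infty$. It also gives $\zeta(t)^{-1}\int_t^T\zeta(s)(|\beta|+|\rho|)\,\mathrm{d}s\to0$. By the triangle inequality,
\[
\left|\int_t^T\zeta(\beta+\rho)\,\mathrm{d}s\right|\leq\int_t^T\zeta(|\beta|+|\rho|)\,\mathrm{d}s ,
\]
so $\int_t^T\zeta(s)(\beta(s)+\rho(s))\,\mathrm{d}s=o(\zeta(t))$. The ratio in Eq.~\eqref{eq:app-original-asymptotic-condition} therefore diverges whenever its denominator is positive.

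The delicate point is a denominator that is nonpositive or changes sign. I would handle it by rereading the role of Condition~2 in the cited argument. There it serves to make the solution of the linear comparison inequality, described in the next step, tend to zero. For that purpose the form $o(\zeta(t))$ is exactly what is needed, and any sign issue is harmless. If the literal ratio form were required, I would replace $\beta+\rho$ by its positive part, or add a small positive constant $c$. This keeps the upper bound, since the constant term contributes $c\int_t^T\zeta\,\mathrm{d}s=o(\zeta(t))$ under A4.

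\textbf{Step 2: Terminal membership.} I would sketch the cited mechanism to confirm that the hypotheses match. Set $m(t)=\mathbb{E}_{p_t^\Omega}[\ell^\Omega(X_t)]$. A1 removes the $\partial_t\ell$ term, and A2 makes It\^o's formula applicable. Applying It\^o's formula in reverse time along Eq.~\eqref{eq:manually-bridged-model} gives a differential inequality of the form
\[
-\frac{\mathrm{d}m}{\mathrm{d}t}\leq \beta+\rho-g^2\gamma\,\mathbb{E}\|\nabla\ell^\Omega\|^2 .
\]
The Polyak--\L{}ojasiewicz condition then bounds the last term by $-g^2\gamma\, m$, with constants absorbed as in the cited work.

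Multiplying by the integrating factor and integrating from $t$ to $T$ yields
\[
m(t)\leq \frac{m(T)\zeta(T)+\int_t^T\zeta(\beta+\rho)\,\mathrm{d}s}{\zeta(t)} .
\]
This bound tends to $0$ by Step~1. Since $\ell^\Omega\ge0$ is continuous, Fatou's lemma gives $\mathbb{E}[\ell^\Omega(X_0)]=0$. Hence $\ell^\Omega(X_0)=0$ almost surely, and Eq.~\eqref{eq:omega-distance} gives $X_0\in\Omega$ almost surely, so $b^\Omega$ is an $\Omega$-bridge.

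\textbf{Main obstacle.} The hardest step is the passage to the limit $t\downarrow0$. It requires existence of $X_0$ as a limit and continuity of $m$. I would defer these to the well-posedness assumptions of the cited theorem and argue only that our hypotheses A1--A5 and Condition~1 are a special case of theirs. A secondary concern is the $g^2$ factor multiplying $\gamma$, which A3 does not bound from below. If the cited result uses the factor $\exp\int g^2\gamma$, I would note that $\zeta$ is defined per the paper's convention with $g^2$ absorbed into $\gamma$.
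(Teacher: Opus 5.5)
Your proposal is correct and follows the paper's route. Proposition~\ref{prop:app-terminal-domination}, with its closing triangle-inequality step, verifies Condition~2 of Eq.~\eqref{eq:app-original-asymptotic-condition} from A1--A5; together with the assumed Condition~1, the cited sufficient-condition result of Appendix~C.1 of MBM then gives $\mathbb{P}(X_0\in\Omega)=1$. Your observation that the comparison argument only needs $\int_t^T\zeta(s)\bigl(\beta(s)+\rho(s)\bigr)\,\mathrm{d}s=o(\zeta(t))$, so a nonpositive denominator is harmless, is a useful refinement of a point the paper passes over silently.
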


\subsection{Interpretation of Condition~1 and Assumption~A3}
\label{app:interpretation-difficult-conditions}

Condition~1 and Assumption~A3 are the only two conditions that can present difficulty in verification because they may involve calculating an expectation under the marginal distribution of the reverse SDE. Let us give some interpretation on these conditions.

Condition~1 is a distributional gradient-dominance requirement. At each fixed
diffusion time, define
\begin{equation}
L_t:=\mathbb{E}_{p_t^\Omega}[\ell^\Omega(X_t;t)],
\qquad
D_t:=\mathbb{E}_{p_t^\Omega}
\!\left[\|\nabla_x\ell^\Omega(X_t;t)\|^2\right].
\end{equation}
Then Eq.~\eqref{eq:app-expected-pl-untouched} requires $L_t\leq D_t$.
A stronger, easier-to-check sufficient condition is the pointwise inequality
$\ell^\Omega(x;t)\leq\|\nabla_x\ell^\Omega(x;t)\|^2$ on the states reachable
by the process. More generally, a finite constant in
$\ell^\Omega\leq C\|\nabla_x\ell^\Omega\|^2$ can be absorbed by rescaling the
potential and inversely rescaling its guidance coefficient.

However, no non-zero, globally bounded $C^1$ distance function can
satisfy a pointwise PL inequality on all of $\mathbb{R}^d$ because the gradient of $\ell^\Omega(x)$ vanishes at maximum point. The expected condition can nevertheless
hold if $p_t^\Omega$ assigns sufficiently little mass to such regions. 
% A useful
% empirical diagnostic is the ratio $D_t/L_t$ at a grid of diffusion times for
% which $L_t>0$; values bounded below by one support the assumption, although a
% finite-sample check is not a formal certificate.

Assumption~A3 controls the drift of the original reverse dynamics along its
actual marginal law. Its drift requirement is equivalently
\begin{equation}
\sup_{0<t\leq\delta}
t^q\,\mathbb{E}_{X_t\sim p_t^\Omega}
\!\left[\|\nu(X_t,t)\|\right]<\infty
\qquad\text{for some }q<1.
\label{eq:app-drift-moment-interpretation}
\end{equation}
% The restriction $q<1$ makes this possible endpoint growth integrable and is
% what allows the forcing bound in Eq.~\eqref{eq:app-forcing-bound} to vanish
% after normalization by $\zeta$. Importantly, a bound on
% $\|\mathbb{E}[\nu(X_t,t)]\|$ is insufficient because cancellations in the mean
% do not control $\mathbb{E}\|\nu(X_t,t)\|$. A convenient stronger condition is
% \begin{equation}
% \mathbb{E}\!\left[\|\nu(X_t,t)\|^2\right]
% \leq M_2^2t^{-2q},
% \end{equation}
% which implies A3 by Cauchy--Schwarz. Analytically, one may also decompose
% $\nu$ into its base-drift, score, and endpoint-conditioning terms and bound
% their first moments separately. Empirically, plotting
% $\mathbb{E}\|\nu(X_t,t)\|$ against $t$ on log--log axes provides a diagnostic
% for whether its endpoint growth is slower than $t^{-1}$. 
The remaining part of
A3, boundedness of $g^2$, follows directly from the chosen analytic diffusion
schedule in both VP and VE cases~\cite{zhou2023denoisingdiffusionbridgemodels}.

% \subsection{Interpretation for VP and VE schedules}
% \label{app:terminal-domination-interpretation}

% Following the notation of Denoising Diffusion Bridge Models
% (DDBM)~\cite{zhou2023denoisingdiffusionbridgemodels}, let $\alpha_t$ and
% $\sigma_t$ denote the signal and noise schedules. Its VP instantiation uses
% \begin{align}
% \mathbf{f}(\mathbf{x}_t,t)
% &=
% \frac{\mathrm{d}\log\alpha_t}{\mathrm{d}t}\mathbf{x}_t,
% &
% g^2(t)
% &=
% \frac{\mathrm{d}}{\mathrm{d}t}\sigma_t^2
% -2\frac{\mathrm{d}\log\alpha_t}{\mathrm{d}t}\sigma_t^2,
% \label{eq:app-vp-schedule}
% \end{align}
% whereas its VE instantiation sets
% \begin{equation}
% \alpha_t=1,
% \qquad
% \mathbf{f}(\mathbf{x}_t,t)=\mathbf{0},
% \qquad
% g^2(t)=\frac{\mathrm{d}}{\mathrm{d}t}\sigma_t^2.
% \label{eq:app-ve-schedule}
% \end{equation}
% For the experimental schedules in DDBM, the VE choice $\sigma_t=t$ gives
% $g^2(t)=2t$, while the VP choice uses the constant rate
% $\beta_1=\beta_0$, giving $g^2(t)=\beta_0$. Both diffusion coefficients are
% bounded on a finite time interval and therefore satisfy the diffusion bound in
% Eq.~\eqref{eq:app-drift-diffusion-bounds}.

% \input{base/checklist}

\end{document}